\def\BibTeX{{\rm B\kern-.05em{\sc i\kern-.025em b}\kern-.08em
    T\kern-.1667em\lower.7ex\hbox{E}\kern-.125emX}}
\newtheorem{theorem}{Theorem}
\newtheorem{lemma}{Lemma}
\begin{document}

\title{\LARGE \bf 3D Radar Velocity Maps for Uncertain Dynamic Environments
}

\author{Ransalu Senanayake$^{*1}$, Kyle Beltran Hatch$^{*1}$, Jason Zheng$^{2}$, and Mykel J. Kochenderfer$^{1}$
\thanks{$^{*}$Equal contribution.}
\thanks{$^{1}$R. Senanayake, K. B. Hatch, and M. J. Kochenderfer are with the Stanford Intelligent Systems Laboratory (SISL) in the Aeronautics and Astronautics Department, Stanford University, 496 Lomita Mall, Stanford, CA 94305, USA. Email: 
        {\tt\small \{khatch, ransalu, mykel\}@stanford.edu}.}
\thanks{$^{2}$J. Zheng is with the Department of Computer Science, Stanford University. Email: {\tt\small jzzheng@stanford.edu}.}
}

\maketitle

\begin{abstract}
Future urban transportation concepts include a mixture of ground and air vehicles with varying degrees of autonomy in a congested environment. In such dynamic environments, occupancy maps alone are not sufficient for safe path planning. Safe and efficient transportation requires reasoning about the 3D flow of traffic and properly modeling uncertainty. Several different approaches can be taken for developing 3D velocity maps. This paper explores a Bayesian approach that captures our uncertainty in the map given training data. The approach involves projecting spatial coordinates into a high-dimensional feature space and then applying Bayesian linear regression to make predictions and quantify uncertainty in our estimates. On a collection of air and ground datasets, we demonstrate that this approach is effective and more scalable than several alternative approaches.
\end{abstract}

\section{Introduction}

\begin{figure*}[]
     \begin{subfigure}[b]{0.3\linewidth}
         \centering
         \includegraphics[width=1.\linewidth]{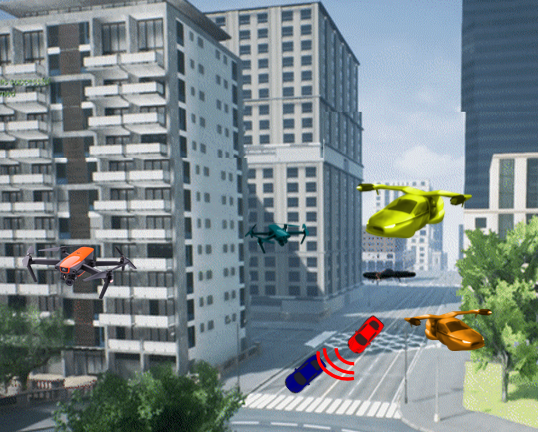}
         \caption{A future urban environment.}\label{fig:1a}	
    \end{subfigure}
    \begin{subfigure}[b]{0.34\linewidth}
         \centering
         \includegraphics[width=1.\linewidth]{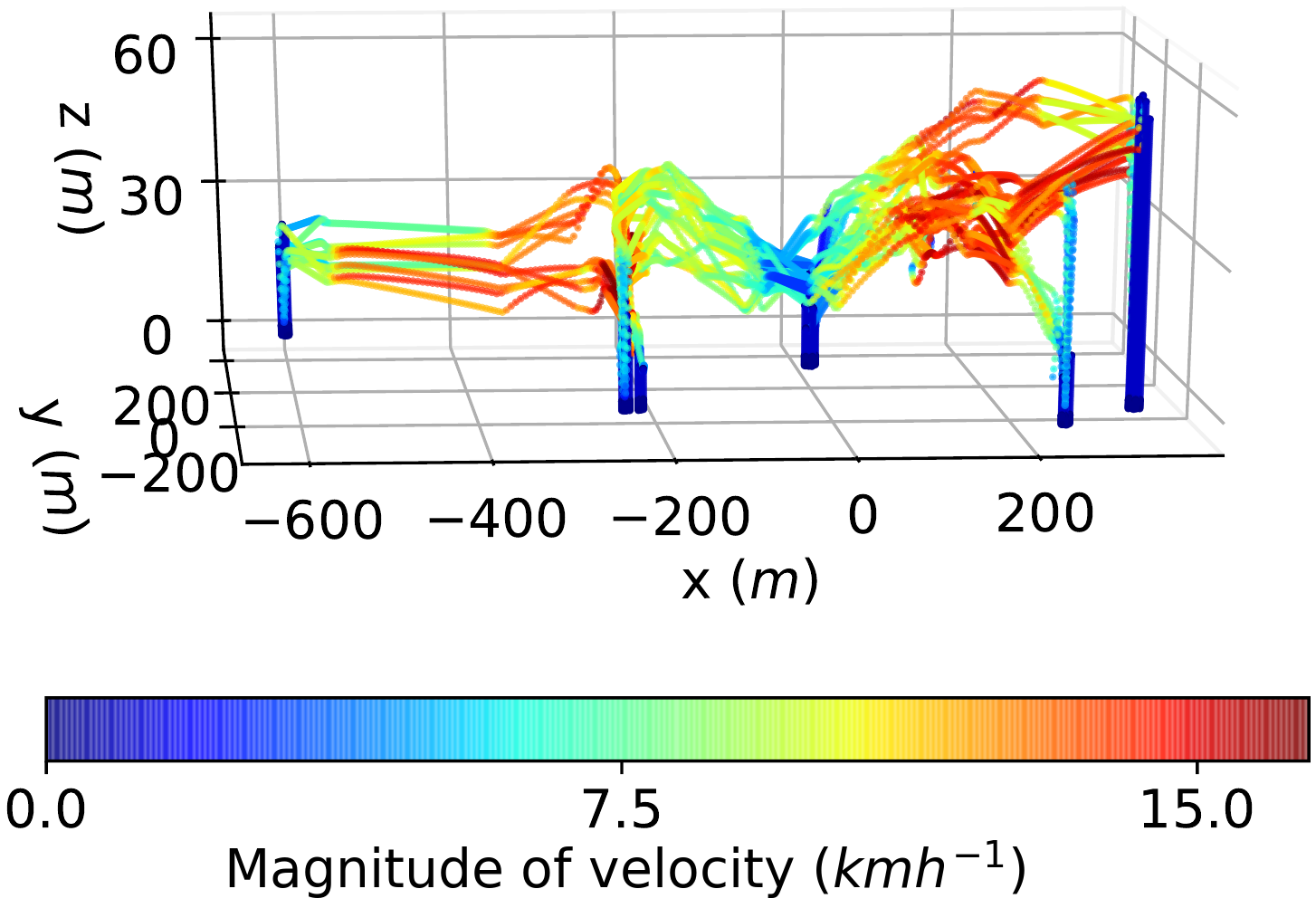}
         \caption{Velocity of drone trajectories.}\label{fig:1b}
    \end{subfigure}
    \begin{subfigure}[b]{0.34\linewidth}
         \centering
         \includegraphics[width=1.\linewidth]{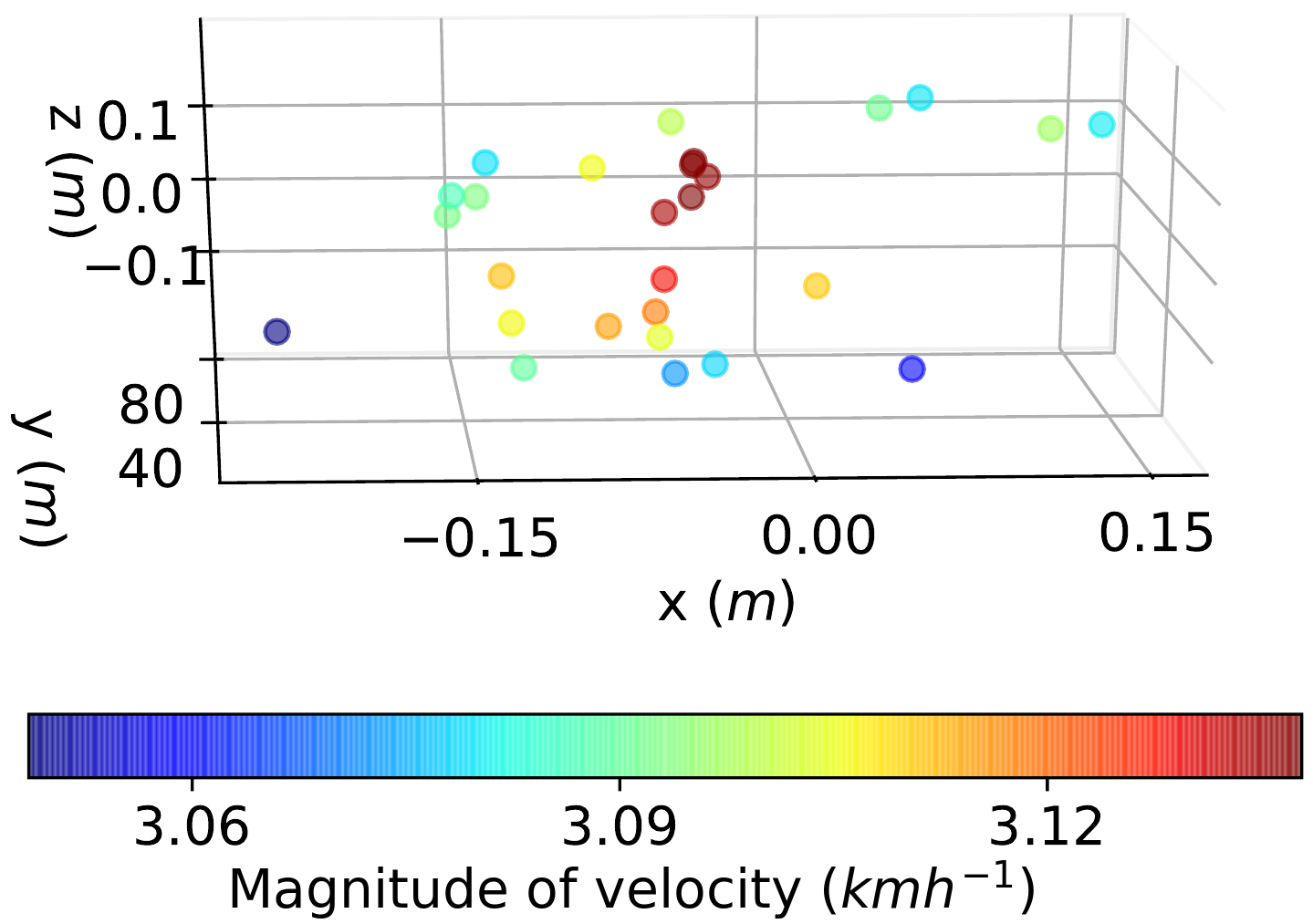}
         \caption{Radar point cloud velocity of the red car.}\label{fig:1c}	
    \end{subfigure}
 \caption{(a) Since we will have complex urban environments with ground vehicles, delivery drones, and urban air mobility (UAM) in the future, it is important to model the dynamics such as velocity and acceleration of the environment. (b) We need to build \textit{macroscopic} models of both ground and air roads for the whole city from \textit{large} amounts of trajectory data. (c) Vehicles also need to build \textit{microscopic} models of the velocity of objects around them from \textit{small} amounts of sparse data. Being able to learn the velocity \textit{quickly} helps making efficient decisions. Being able to quantify the associated \textit{uncertainty} aids in making safe decisions.}
 \label{fig:fig1}
\end{figure*}

Future urban transportation system concepts include a mixture of both autonomous and human-driven vehicles. We anticipate driverless cars on roads as well as  delivery drones~\cite{choudhury2020efficient} and urban air mobility (UAM) systems with vertical take-off and landing capabilities~\cite{holden2016}. These advances add complexity to our transportation systems (Figure~\ref{fig:fig1}a), making decision making for autonomous vehicles operating in such dynamic systems even more challenging. 

Safe and efficient transportation in a congested environment requires accurately modeling the flow of traffic in 3D space at both the macroscopic and microscopic levels. At the macroscopic level, the velocity maps estimate the average behavior of vehicles at different locations in the system, as opposed to estimating the current behavior of surrounding targets (Figure~\ref{fig:fig1}b). We may have large amounts of data to build such models, and we want to be able to build them efficiently. At the microscopic level, we want to model the surroundings of an individual vehicle from a continuous stream of data to allow it to maneuver safely (Figure~\ref{fig:fig1}c). These velocity maps estimate the behavior of vehicles surrounding ego vehicle at the current point in time. We often only have to learn models from very little data.

To provide robust control of these vehicles, it is important to capture the uncertainty associated with our velocity maps. For every point in the 3D space, for example, it would be helpful to quantify both the mean and variance of the various velocity components. While Bayesian nonparametric methods such as Gaussian process-based models~\cite{rasmussen2003gaussian} can provide uncertainty estimates, they (as well as many of their approximations~\cite{senanayake2017learning}) are generally not suitable for building large-scale macroscopic models because their computational complexity grows too quickly with the number of data points~\cite{rasmussen2003gaussian,o2012gaussian}.

This paper applies an approach that has many of the desirable attributes of a Gaussian process-based model, but it is faster and more memory efficient. We adopt an approach that involves projecting the spatial coordinates into a high-dimensional, yet interpretable, feature space to capture information local to a given area. Bayesian linear regression is used to learn the parameters of the model. We can then query arbitrary points in the 3D space to obtain smooth estimates of the mean and variance of the velocity components. The model can be efficiently updated online, making it amenable to changes in the environment. A theoretical analysis shows that this model is robust against input noise. We demonstrate our approach on simulated and real-world datasets.

\section{Related Work}

Many robotics applications involve building maps of the environment. Occupancy maps is the most common representation. Occupancy maps alone can only be used to navigate through an environment when surrounding obstacles are stationary. However, in real urban environments an autonomous vehicle must be able to safely navigate around both stationary obstacles and moving vehicles. Developing velocity maps are therefore crucial for planning algorithms to plan safe trajectories for autonomous vehicles operating in urban environments.

Techniques such as occupancy grid maps~\cite{elfes1989using} discretize the environment and model if a cell is free or occupied. Such models assume the cells are independent, ignoring neighborhood information. While such methods can model the binary occupancy probability, they do not model epistemic uncertainty. To address these limitations, several 2D Bayesian models have been proposed~\cite{o2012gaussian, senanayake2017bayesian,mcleod2019navigating,duong2020autonomous}. These models can be queried at an arbitrary resolution at run time. There have also been various attempts to model occupancy in dynamic environments~\cite{senanayake2016spatio,itkina2019dynamic,Toyungyernsub2020arxiv}. However, such models do not explicitly represent the velocity as a map. We model the uncertainty of velocity in the 3D space which is computationally challenging compared to conventional 2D occupancy mapping techniques. Although there are similarities with the model we explore in this paper, these models use a binary random variable to model occupancy~\cite{senanayake2018building}. In this work, we are interested in modeling the velocity which is not a binary variable. 

Velocity modeling has previously been studied in various disciplines~\cite{lawrance2011path,homicz2002three}. However, these models are deterministic. Other models that attempt to estimate quantities that change temporally include modeling the long-term occupancy~\cite{senanayake2017learning} and directions~\cite{Senanayake2020itsc} in 2D. In contrast, the objective of this paper is modeling velocity with their associated epistemic uncertainties in 3D space.

\section{Bayesian Dynamic Fields}

This section introduces the proposed framework for modeling 3D velocity maps. Since we want to model the spatial field of dynamics such as velocity with its associated uncertainty, we refer to this framework as Bayesian Dynamic Fields (BDF). First, we explain how to generate high dimensional features from data using kernel functions. Then, we discuss how to build a linear model from these features and estimate uncertainty with Bayesian linear regression with these basis functions. These basis functions can be customized for different datasets. This model is applied to build macroscopic and microscopic dynamic models. Finally, a theoretical analysis of the robustness of the proposed framework is presented.

\subsection{High dimensional feature space}
\label{sec:kernels}

Our objective is to model the velocity field and its corresponding uncertainty field in a given 3D space. Velocity variations exhibit nonlinear patterns with respect to spatial location. A common tool for modeling nonlinear patterns is deep neural networks, but they generally require large amounts of training data and can be slow to train. We focus on kernel methods~\cite{kivinen2002online}, which have been successfully used to model spatial quantities such as occupancy~\cite{senanayake2017bayesian,vallicrosa2018h,ramos2016hilbert} and directions~\cite{zhi2019spatiotemporal} in robotics, soil concentration in geostatistics~\cite{cressie2015statistics}, and disease propagation in epidemiology~\cite{senanayake2016predicting}.

Kernels are similarity functions. A kernel, $k(\mathbf{x}_a,\mathbf{x}_b)$, takes two inputs $\mathbf{x}_a$ and $\mathbf{x}_b$ and outputs a measure on how similar the two inputs are. In this work, we use the squared-exponential kernel because of its simplicity and interpretability:
\begin{equation}
    k(\mathbf{x}_a,\mathbf{x}_b) = \exp(-\gamma \| \mathbf{x}_a - \mathbf{x}_b \|_2^2),
    \label{eq:kernel}
\end{equation}
where $\gamma$ is the inverse bandwidth hyperparameter. This hyperparameter controls the sensitivity of the similarity. Kernels with smaller values of $\gamma$ capture correlations over larger areas. 

We use this kernel to define a set of $M$ basis functions, $k(\mathbf{x}, \tilde{\mathbf{x}}_1), \ldots, k(\mathbf{x}, \tilde{\mathbf{x}}_M)$, where $\mathbf{x}$ is any point in 3D space and $\tilde{\mathbf{x}}_1, \ldots, \tilde{\mathbf{x}}_M$ are fixed points in that space. We arrange the fixed points in a 3D regular grid. Although kernels can alternatively be computed through random Fourier features~\cite{mildenhall2020nerf,ramos2016hilbert} or Nystrom approximation, we use a grid for simplicity, interpretability, and accuracy~\cite{ramos2016hilbert}. If desired, these fixed points can be learned alongside other parameters~\cite{senanayake2018automorphing}.

A data point $\mathbf{x} \in \mathbb{R}^3$ may come, for example, from radar (Figure~\ref{fig:fig2}a--b) or IMU measurements. For $N$ such data points, $\mathbf{X} = \{ \mathbf{x}_n \}_{n=1}^N$, the feature matrix $\Phi(\mathbf{X}) \in \mathbb{R}^{N \times M}$ is defined as,
\begin{equation}
    \Phi(\mathbf{X}) = 
    \begin{bmatrix}
       k(\mathbf{x}_1, \tilde{\mathbf{x}}_1) & k(\mathbf{x}_1, \tilde{\mathbf{x}}_2) & \dots & k(\mathbf{x}_1, \tilde{\mathbf{x}}_M)\\
       k(\mathbf{x}_2, \tilde{\mathbf{x}}_1) & k(\mathbf{x}_2, \tilde{\mathbf{x}}_2) & \dots & k(\mathbf{x}_2, \tilde{\mathbf{x}}_M)\\
       \vdots & \vdots & \ddots & \vdots  \\
      k(\mathbf{x}_N, \tilde{\mathbf{x}}_1) & k(\mathbf{x}_N, \tilde{\mathbf{x}}_2) & \dots & k(\mathbf{x}_N, \tilde{\mathbf{x}}_M)
    \end{bmatrix}.
    \label{eq:features}
\end{equation}
This matrix would be mostly sparse with many values close to zero because the kernel function goes to zero when its two inputs are far apart.

\begin{figure}[]
     \begin{subfigure}[b]{1.00\linewidth}
         \centering
         \includegraphics[width=1.\linewidth]{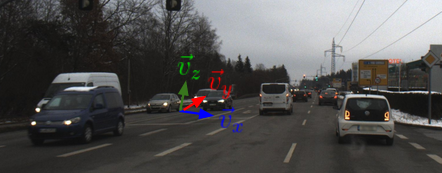}
         \caption{Field of view.}\label{fig:2a}	
    \end{subfigure}
    \begin{subfigure}[b]{0.58\linewidth}
         \centering
         \includegraphics[width=1.\linewidth]{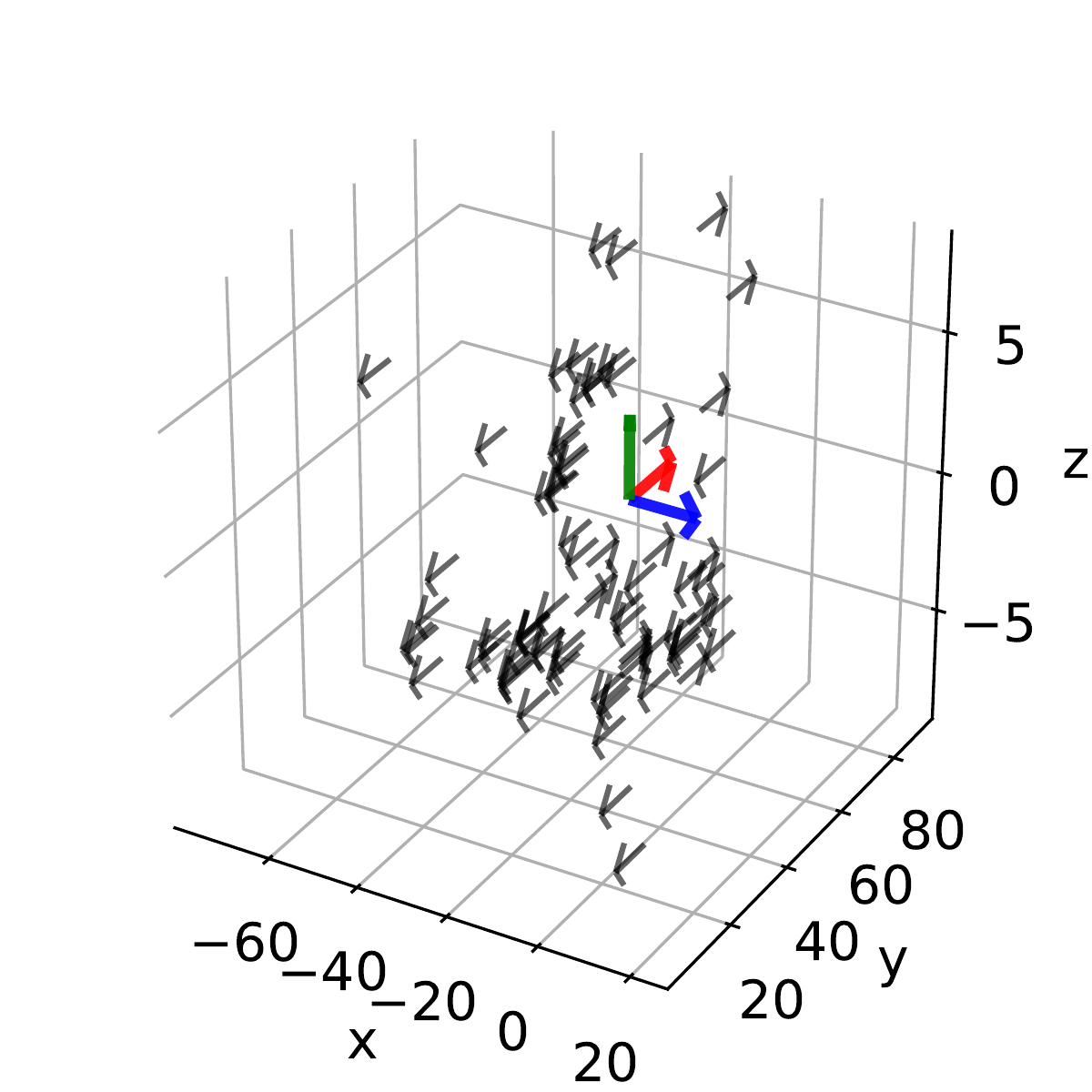}
         \caption{Radar measurements.}\label{fig:2b}
    \end{subfigure}
    \begin{subfigure}[b]{0.4\linewidth}
         \centering
         \includegraphics[width=1.\linewidth]{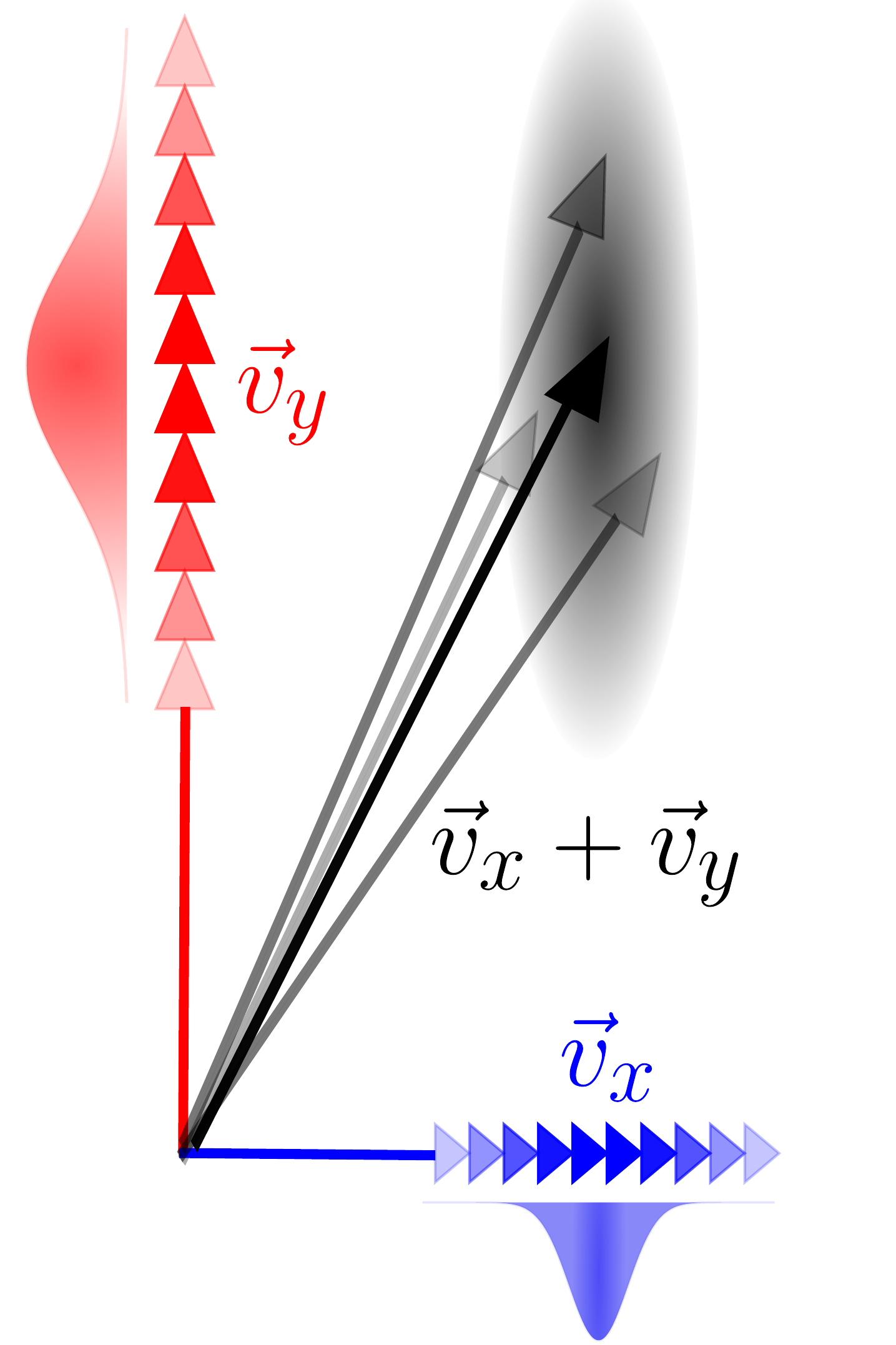}
         \caption{Uncertainty of velocity.}\label{fig:2c}
    \end{subfigure}
 \caption{Unlike LIDAR, automotive radars provides the velocity associated with each point in the point cloud. We want to estimate the uncertainty of velocity at the arbitrary point indicated by the three colored arrows. (a) The camera image of an area an automotive radar can see. (b) The velocity vectors from pre-processed 3D radar measurements~\cite{meyer2019automotive} are in black. (c) The estimates from our model are not deterministic vectors but distributions of velocities for each direction. For simplicity, we only show the red and blue directions. A few resulting vectors are shown in black.}
 \label{fig:fig2}
\end{figure}

\subsection{Bayesian inference}
\label{sec:bayes}

We want to build a model that can estimate the velocity of a given point in the environment. For instance, as shown in Figure~\ref{fig:fig2}, given some sparse velocity measurements, we want to know the velocity at an arbitrary location indicated by the three colored arrows. Because the three directional velocity components ${v}^{(x)}$, ${v}^{(y)}$, and ${v}^{(z)}$ are independent with each other, three different models are learned in parallel.

If the velocity component labels of each datapoint $\mathbf{x}$ is denoted by $v$, then the training dataset can be defined as $\mathcal{D}=\{ (\mathbf{x}_n,v_n) \}_{n=1}^N = (\mathbf{X}, \mathbf{v})$. Since we projected the data into $M$-dimensional space, we can now create a linear model $v = \mathbf{w}^\top \Phi(\mathbf{x}) + \epsilon$ with noise $\epsilon \sim \mathcal{N}(0, \beta^{-1})$ where $\beta$ is the noise precision. Our objective is to learn the parameter vector $\mathbf{w} \in \mathbb{R}^M$ from $\mathcal{D}$. The velocities at one of the three directions in a given location are modeled as a Gaussian distribution. Because measurements are \emph{i.i.d.}, the likelihood can be decomposed as $p(\mathbf{v} \vert \mathbf{w},\mathbf{X},\beta) = \prod_{n=1}^N \mathcal{N}(v_n \vert \mathbf{w}^\top \Phi(\mathbf{x}_n), \beta^{-1})$.

As illustrated in Figure~\ref{fig:fig2}c, we are not only interested in estimating the velocity but also the associated uncertainty. In order to model the epistemic uncertainty, we consider a prior probability distribution over $\mathbf{w}$. A Gaussian distribution over $\mathbf{w} \sim \mathcal{N}(\bm{\mu}_0, \bm{\Sigma}_0)$ is a conjugate prior to the likelihood model of our interest. With this prior, the posterior distribution $p(\mathbf{w} \vert \mathbf{X},\mathbf{v}) = \mathcal{N}(\mathbf{w} \vert \bm{\mu}_1,{\bm{\Sigma}}_1)$ from Bayesian linear regression in the feature space can be computed analytically~\cite{bishop2006pattern}:
\begin{align}
    \bm{\mu}_1 &= \bm{\Sigma}_1(\bm{\Sigma}_0^{-1} \bm{\mu}_0 + \beta {\Phi^\top(\mathbf{X})} \mathbf{y}) 
\label{eq:post_mu}\\
    \bm{\Sigma}_1 &= (\bm{\Sigma}_0^{-1} + \beta {\Phi^\top(\mathbf{X})} {\Phi(\mathbf{X})})^{-1}.
    \label{eq:post_sig}
\end{align}

Given that our prior knowledge about an area is minimal, we can set $\bm{\mu}_0=\mathbf{0}$ and $\Sigma_0 = \alpha^{-1} \mathbf{I}$, where $\alpha$ is a small parameter indicating the precision (i.e. the inverse of variance) of the prior. This indicates an almost uninformative prior. Furthermore, this prior acts as a natural regularizer for the high-dimensional regression problem. 

Having obtained the posterior distribution, the posterior predictive distribution $p(v_* \vert \mathbf{x}_*, \mathcal{D}, \alpha, \beta) = \mathcal{N}(v_* \vert \bm{\mu}_*,\bm{\sigma}_*)$ for an arbitrary unknown point $\mathbf{x}_* \in \mathbb{R}^3$ can be queried analytically:
\begin{align}
    \bm{\mu}_* &= \bm{\mu}_1^\top \Phi(\mathbf{x_*})\\
    \bm{\sigma}_*^2 &= \beta^{-1} + {\Phi}^\top(\mathbf{x_*}) \bm{\Sigma}_{1} {\Phi(\mathbf{x_*})}.
\label{eq:pred_sig}
\end{align}
If we have a batch of query points, $\Phi$ can be computed as in (\ref{eq:features}).

\subsection{Dimension-adjusted kernels}
\label{sec:ard}
When defining the kernel in (\ref{eq:kernel}), we considered that the norm between a data point $\mathbf{x} \in \mathbb{R}^3$ and a fixed point $\tilde{\mathbf{x}} \in \mathbb{R}^3$ is scaled by the hyperparameter $\gamma$. However, if the nonlinear patterns change in different rates in different axes of the 3D space, we can scale them differently to better fit the model:
\begin{equation}
    k(\mathbf{x},\tilde{\mathbf{x}}) = \exp \big( - (\mathbf{x} - \tilde{\mathbf{x}})^\top \Gamma^{-1}  (\mathbf{x} - \tilde{\mathbf{x}}) \big),
\end{equation}
where
\begin{equation}
    \Gamma = 
    \begin{bmatrix}
    \gamma_x & 0 & 0\\
     0 & \gamma_y & 0\\
     0 & 0 & \gamma_z
    \end{bmatrix},
    \label{eq:ard}
\end{equation}
is the hyperparameter matrix. Although it is possible to consider the full non-diagonal matrix by considering the hyperparameter-hyperparameter covariances, in this application, we ignore such complex interactions for the sake of simplicity. It is also possible to learn the parameters~\cite{senanayake2018automorphing} or learn a completely new kernel function (\ref{eq:kernel})~\cite{gonen2011multiple}. 

\subsection{Macroscopic and microscopic velocity maps}
\label{sec:macromicro}
As shown in Figure~\ref{fig:fig1}b, a macroscopic model represents the velocity of a large area of an urban environment. These models will especially be useful when designing urban air mobility systems~\cite{thipphavong2018urban}. In order to build a global model, trajectory information of vehicles in the environment are collected for a long time period. This information can be obtained from surveillance radar~\cite{Jung2019} or IMU data. The velocity is represented as $\mathbf{v}=(v_x, v_y, v_z)$ and we perform inference separately for each of the different dimensions. For each of these learning problems, we place a regular grid over the entire space for fixed points. 

In macroscopic mapping, we have to learn large environments with lots of data. In such environments, when data arrives sequentially, it is possible to use the posterior distribution from the previous time step as the prior distribution in the current time step before applying the update rules in (\ref{eq:post_mu})--(\ref{eq:post_sig}). Furthermore, when data is obtained sequentially, we can start with the assumption that the environment is static and populate the 3D environment (training dataset) with quasi-Monte Carlo (QMC) samples of velocity zero to improve the learning efficiency. QMC sampling techniques are known to be sample efficient over Monte Carlo techniques~\cite{lemieux2009monte}. In particular, Sobol and generalized Halton QMC sequences can populate the 3D free space more evenly~\cite{lemieux2009monte,tompkins2019black}. Trajectory data points that are in the neighborhood measured by the Euclidean distance of the populated data are removed from the dataset. 

For the microscopic model, we are interested only in modeling the field of view of the ego vehicle (Figures~\ref{fig:fig1}c and~\ref{fig:fig2}). For such models, velocity information coming from automotive radar is used. Since automotive radar point clouds, unlike LIDAR measurements, are extremely sparse, modeling the epistemic uncertainty is important so that we know which of our velocity estimates are less reliable. Furthermore, automotive radar measurements tend to have higher noise levels and therefore, modeling the aleatoric uncertainty is equally important. Equation (\ref{eq:pred_sig}) is the combined aleatoric-epistemic uncertainty estimation.

\subsection{Wasserstein robustness against input noise}
\label{sec:robustness}

Sensor measurements, especially radar measurements, are typically corrupted by some noise. In this section, we theoretically analyze whether the proposed model can withstand input perturbations~\cite{kuhn2019wasserstein}.

\begin{lemma}
The squared 2-Wasserstein distance between a normal distribution $\mathcal{N}(\mathbf{x},\nu^2I)$ and a point $\mathbf{x}_m$ is $\mathcal{W}^2_2 = \| \mathbf{x} - \mathbf{x}_m \|_2^2 + \nu^2$. 
\end{lemma}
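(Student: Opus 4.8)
The plan is to exploit the fact that a single point $\mathbf{x}_m$ is really a degenerate probability measure, namely the Dirac delta $\delta_{\mathbf{x}_m}$, and then to evaluate the $2$-Wasserstein distance between $\mathcal{N}(\mathbf{x},\nu^2 I)$ and $\delta_{\mathbf{x}_m}$ directly from its definition as an optimal transport cost. The crucial observation is that when one of the two marginals is a Dirac measure there is no optimization left to do: every coupling $\pi$ with marginals $\mathcal{N}(\mathbf{x},\nu^2 I)$ and $\delta_{\mathbf{x}_m}$ must place all of its mass on points whose second coordinate equals $\mathbf{x}_m$, so the transport plan is forced to be the product measure. Consequently the infimum in the definition collapses to a single expectation,
\begin{equation}
\mathcal{W}_2^2 = \mathbb{E}_{\mathbf{a}\sim\mathcal{N}(\mathbf{x},\nu^2 I)}\big[\|\mathbf{a}-\mathbf{x}_m\|_2^2\big],
\end{equation}
and the problem reduces to computing a second moment.

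Next I would evaluate this expectation by the standard bias--variance split. Writing $\mathbf{a}=\mathbf{x}+\mathbf{z}$ with $\mathbf{z}\sim\mathcal{N}(\mathbf{0},\nu^2 I)$, I expand
\begin{equation}
\|\mathbf{a}-\mathbf{x}_m\|_2^2 = \|\mathbf{x}-\mathbf{x}_m\|_2^2 + 2(\mathbf{x}-\mathbf{x}_m)^\top\mathbf{z} + \|\mathbf{z}\|_2^2,
\end{equation}
and take expectations term by term. The deterministic first term passes through unchanged, the cross term vanishes because $\mathbb{E}[\mathbf{z}]=\mathbf{0}$, and the last term is the total variance of the Gaussian, i.e. the trace of its covariance, which contributes the additive term $\nu^2$. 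Collecting the surviving terms yields $\mathcal{W}_2^2=\|\mathbf{x}-\mathbf{x}_m\|_2^2+\nu^2$, as claimed.

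I do not expect a genuine obstacle here; the content of the lemma is almost entirely the reduction step, so the main thing to get right is the justification that the coupling against a Dirac marginal is forced (unique) rather than merely optimal, after which the computation is elementary. As an independent sanity check I would instantiate the closed-form expression for the $2$-Wasserstein distance between two Gaussians, $\mathcal{W}_2^2=\|\mathbf{m}_1-\mathbf{m}_2\|_2^2+\mathrm{tr}\big(\Sigma_1+\Sigma_2-2(\Sigma_1^{1/2}\Sigma_2\Sigma_1^{1/2})^{1/2}\big)$, with $\mathbf{m}_1=\mathbf{x}$, $\mathbf{m}_2=\mathbf{x}_m$, $\Sigma_1=\nu^2 I$, and $\Sigma_2=\mathbf{0}$; the square-root and cross terms drop out because $\Sigma_2=\mathbf{0}$, leaving $\|\mathbf{x}-\mathbf{x}_m\|_2^2+\mathrm{tr}(\nu^2 I)$, which recovers the same bias-plus-variance form and confirms the answer.
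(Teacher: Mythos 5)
Your proof is correct, and it takes a genuinely different route from the paper's. The paper quotes the closed-form 2-Wasserstein distance between two nondegenerate Gaussians, $\mathcal{W}_2^2 = \|\mathbf{x}-\mathbf{x}_m\|_2^2 + \mathrm{Tr}\big(C^2 + C_m^2 - 2(C_m C^2 C_m)^{1/2}\big)$, and then collapses the second Gaussian to a point by taking its variance to zero --- which is exactly the computation you relegate to your final sanity check. Your main argument instead works directly from the optimal-transport definition: any coupling whose second marginal is $\delta_{\mathbf{x}_m}$ must put all its mass on $\mathbb{R}^d \times \{\mathbf{x}_m\}$, so the coupling is unique (the product measure), the infimum is vacuous, and the distance is the plain second moment $\mathbb{E}_{\mathbf{a}\sim\mathcal{N}(\mathbf{x},\nu^2 I)}\left[\|\mathbf{a}-\mathbf{x}_m\|_2^2\right]$, which your bias--variance split evaluates in two lines. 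Your route is more elementary and self-contained: it needs no citation of the (nontrivial) Gaussian--Gaussian formula, and it sidesteps the limiting step that the paper leaves unjustified (passing a formula to the limit of degenerate covariance implicitly requires continuity of $\mathcal{W}_2$, or stability of transport costs under weak convergence, neither of which the paper addresses). The paper's route, in exchange, requires no measure-theoretic reasoning about couplings, only algebra on a cited result. One caveat that applies to both proofs equally: in dimension $d$ the variance contribution is $\mathbb{E}\big[\|\mathbf{z}\|_2^2\big] = \mathrm{tr}(\nu^2 I) = d\nu^2$, so for the 3D inputs considered in this paper the lemma's constant should read $3\nu^2$ rather than $\nu^2$; your phrase ``the trace of its covariance, which contributes the additive term $\nu^2$'' repeats the paper's slip rather than catching it. Neither result is materially harmed --- in Theorem 1 the factor $\exp(-\gamma d\nu^2)$ is still constant in $\mathbf{x}$ and gets absorbed into the weights --- but the constant is worth correcting in both places.
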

\begin{proof}
By computing $\inf \mathbb{E}[\| \mathrm{x} - \mathrm{x}_m \|^2_2]$ between $\mathcal{N}(\mathbf{x},C^2)$ and $\mathcal{N}(\mathbf{x}_m,C_m^2)$, it can be shown that $\mathcal{W}_2 = \| \mathbf{x} - \mathbf{x}_m \|_2^2 + \mathrm{Tr}\big( C^2 + C_m^2 -2(C_m C^2 C_m)^\frac{1}{2} \big)$~\cite{kuhn2019wasserstein}. For diagonal matrices $C_m$ and $C = \nu I$, by reducing the Gaussian to a Dirac delta distribution (by taking the limit of the variance terms to zero), we obtain $\mathcal{W}_2 = \| \mathbf{x} - \mathbf{x}_m \|_2^2 + \nu^2$.\hfill 
\end{proof}

\begin{theorem}
Expectation of estimations of the linear model $\mathbf{w}^\top \Phi(\mathbf{X})$ with weights $\{ w_m \}_{m=1}^M$ where $w_m \sim \mathcal{N}(\mu_m, \sigma_m)$ in Bayesian dynamic fields are unaltered by the input noise $\mathcal{N}(0,\nu^2)$ under the 2-Wasserstein metric.
\end{theorem}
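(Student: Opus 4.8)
The plan is to use the preceding Lemma to show that incorporating input noise through the 2-Wasserstein metric merely rescales every basis function by a common, input-independent constant, which the expectation over the weights then factors out cleanly. First I would model the corruption of a clean input $\mathbf{x}$ by additive noise $\mathcal{N}(0,\nu^2)$ as replacing the deterministic point $\mathbf{x}$ with the distribution $\mathcal{N}(\mathbf{x},\nu^2 I)$, and correspondingly replace the squared Euclidean distance inside the squared-exponential kernel (\ref{eq:kernel}) by the squared 2-Wasserstein distance between this noisy input and each fixed grid point $\tilde{\mathbf{x}}_m$. Invoking the Lemma with $\mathbf{x}_m = \tilde{\mathbf{x}}_m$, we have $\mathcal{W}_2^2 = \| \mathbf{x} - \tilde{\mathbf{x}}_m \|_2^2 + \nu^2$, so the perturbed basis function becomes $\exp(-\gamma(\| \mathbf{x} - \tilde{\mathbf{x}}_m \|_2^2 + \nu^2)) = e^{-\gamma \nu^2}\, k(\mathbf{x}, \tilde{\mathbf{x}}_m)$.

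The key observation is that the extra factor $e^{-\gamma \nu^2}$ is identical across all $M$ basis functions and is independent of the query location $\mathbf{x}$, so the whole perturbed feature vector is a uniform rescaling $\Phi_\nu(\mathbf{x}) = e^{-\gamma \nu^2}\, \Phi(\mathbf{x})$ of the noise-free one. Notably, the Wasserstein treatment leaves the effective bandwidth $\gamma$ untouched and only attenuates the amplitude. Taking the expectation of the linear estimator over the Gaussian weights, whose mean vector is $\bm{\mu} = (\mu_1, \ldots, \mu_M)^\top$, then yields $\mathbb{E}[\mathbf{w}^\top \Phi_\nu(\mathbf{x})] = \bm{\mu}^\top \Phi_\nu(\mathbf{x}) = e^{-\gamma \nu^2}\, \bm{\mu}^\top \Phi(\mathbf{x})$, i.e. the clean expected estimate scaled by a single positive constant. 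I would close by arguing that because this constant is shared by every location and every velocity component, it preserves the spatial structure of the predicted field and can be absorbed into the learned weights (equivalently, into the prior precision $\alpha$), so the expected estimate is effectively unaltered in the sense claimed.

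The main obstacle I anticipate is interpretive rather than computational: making precise the claim that the expectation is \emph{unaltered} when a strict calculation leaves behind the multiplicative factor $e^{-\gamma \nu^2}$. I would need to state carefully that robustness is meant up to this global, location-independent rescaling, and to justify that propagating the perturbation into feature space via the Wasserstein distance in the kernel argument is the legitimate mechanism here. The force of the statement comes precisely from this choice: the naive alternative of integrating the kernel against the noise distribution, $\mathbb{E}_{\bm{\epsilon}}[k(\mathbf{x}+\bm{\epsilon}, \tilde{\mathbf{x}}_m)]$, yields a Gaussian integral that both attenuates the amplitude and inflates the effective bandwidth, giving no clean factorization. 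Establishing that the Wasserstein metric is what produces the pure amplitude rescaling, and hence the preservation of the field's shape, is the crux of the argument.
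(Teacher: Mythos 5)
Your proposal is correct and takes essentially the same route as the paper's proof: both invoke the Lemma to replace the squared Euclidean distance inside each basis function by the squared 2-Wasserstein distance, factor out the location-independent constant $e^{-\gamma\nu^2}$, and absorb it into the weights that are estimated during training. Your explicit caveat that the expectation is strictly only unaltered \emph{up to} this global rescaling (and your remark contrasting this with naive noise integration, which would also inflate the bandwidth) makes precise a step the paper leaves implicit, but the underlying argument is the same.
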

\begin{proof} Let us rewrite the $\mathbf{w}^\top \Phi(\mathbf{X})$ introduced in Section~\ref{sec:bayes} as a summation,
\begin{align*}
\mathbf{y} &\approx \sum_{m=1}^M w_m k(\mathbf{x}, \mathbf{x}_m) \\
&= \sum_{m=1}^M w_m \exp\big( -\gamma (\|\mathbf{x} - \mathbf{x}_m \|_2^2 + \nu^2) \big) \; \text{(Lemma 1)} \\
&= \sum_{m=1}^M w_m \exp(-\gamma  \nu^2) \exp( -\gamma \|\mathbf{x} - \mathbf{x}_m \|_2^2) \\
&=\sum_{m=1}^M w^\prime_m \exp( -\gamma \|\mathbf{x} - \mathbf{x}_m \|_2^2) 
\end{align*}
The measurement noise is absorbed into the distributions $\{ w^\prime_m \}_{m=1}^M$ with $w^\prime_m \sim \mathcal{N}(\mu^\prime_m, \sigma^\prime_m)$ whose parameters are estimated during training. Therefore, the mean estimations are unaffected by noise. \hfill 
\end{proof}

\begin{table}[]
  \centering
  \caption{Datasets}
    \begin{tabular}{@{}lll@{}}
    \toprule
    Dataset & Source & Description \\
    \midrule
    Chunks & Synthetic & 3 closely packed velocity clusters\\
    Blobs &  Synthetic & 3 separated velocity clusters as blobs\\
    Carla &  Simulated & Automotive radar data \\
    Astyx & Real & Automotive radar data\\
    nuScenes & Real &  Automotive radar data\\
    AirSim &  Simulated & 60 drone trajectories to simulate UAM\\
    Airport & Real & 100 aircraft trajectories around an airport\\
    \bottomrule
    \end{tabular}
  \label{table:datasets}
\end{table}

\section{Experiments}

\subsection{Experimental setup}

\begin{table}[]
  \centering
  \caption{Effect of dimension adjustment}
    \begin{tabular}{@{}lrr@{}}
    \toprule
    $[\gamma_x, \gamma_y, \gamma_z]$ & RMSE & MSLL  \\
    \midrule
     $[0.1,0.1,0.1]$ & 1.368 & $-$1445  \\
     $[100,0.1,0.1]$ & {\bf 0.778} & $-$1426   \\
     $[100,100,100]$ & 1.496 & $-$1413   \\
    \bottomrule
    \end{tabular}
  \label{table:ard}
\end{table}

\begin{figure}[]
     \begin{subfigure}[b]{0.32\linewidth}
         \centering
         \includegraphics[width=1.\linewidth]{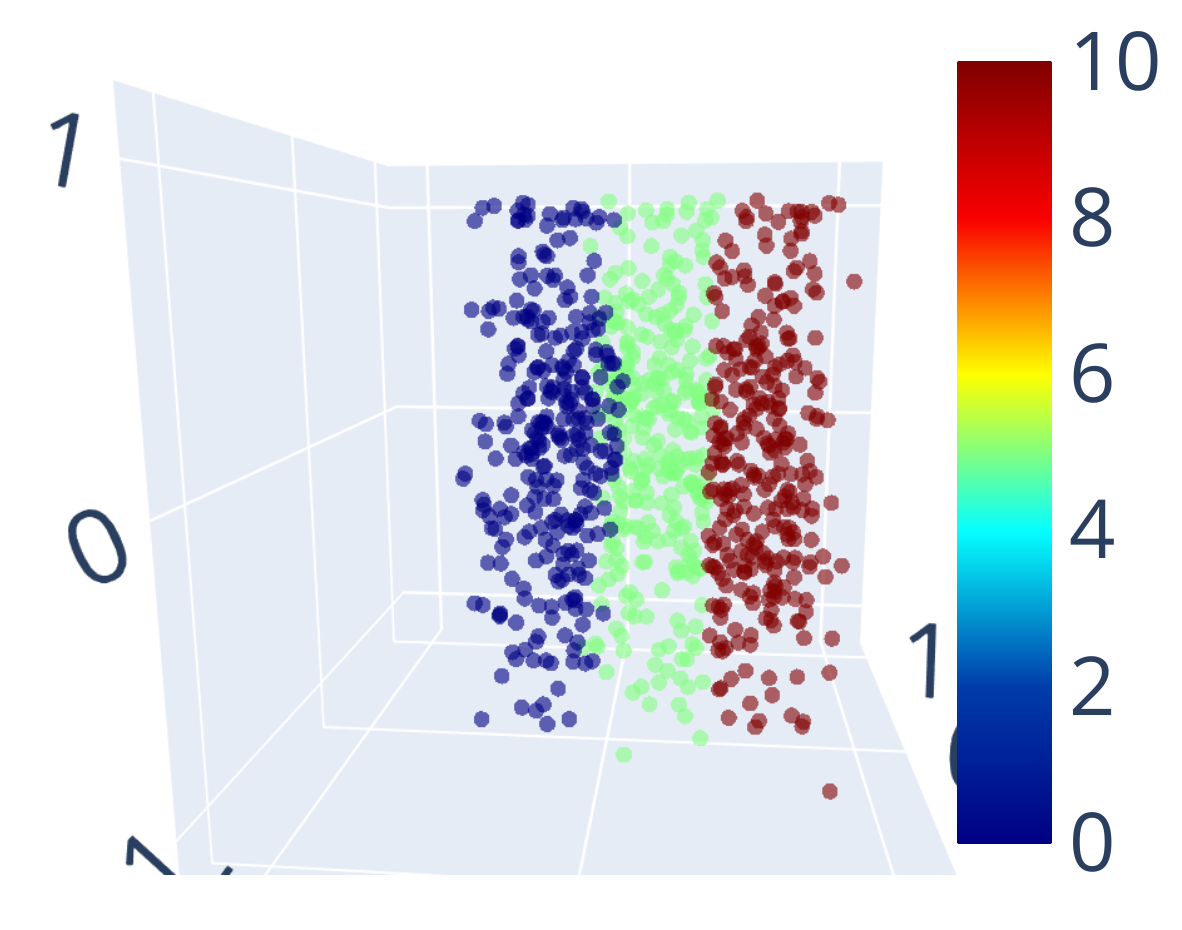}
         \caption{Chunk dataset}\label{fig:3a}	
    \end{subfigure}
    \begin{subfigure}[b]{0.32\linewidth}
         \centering
         \includegraphics[width=1.\linewidth]{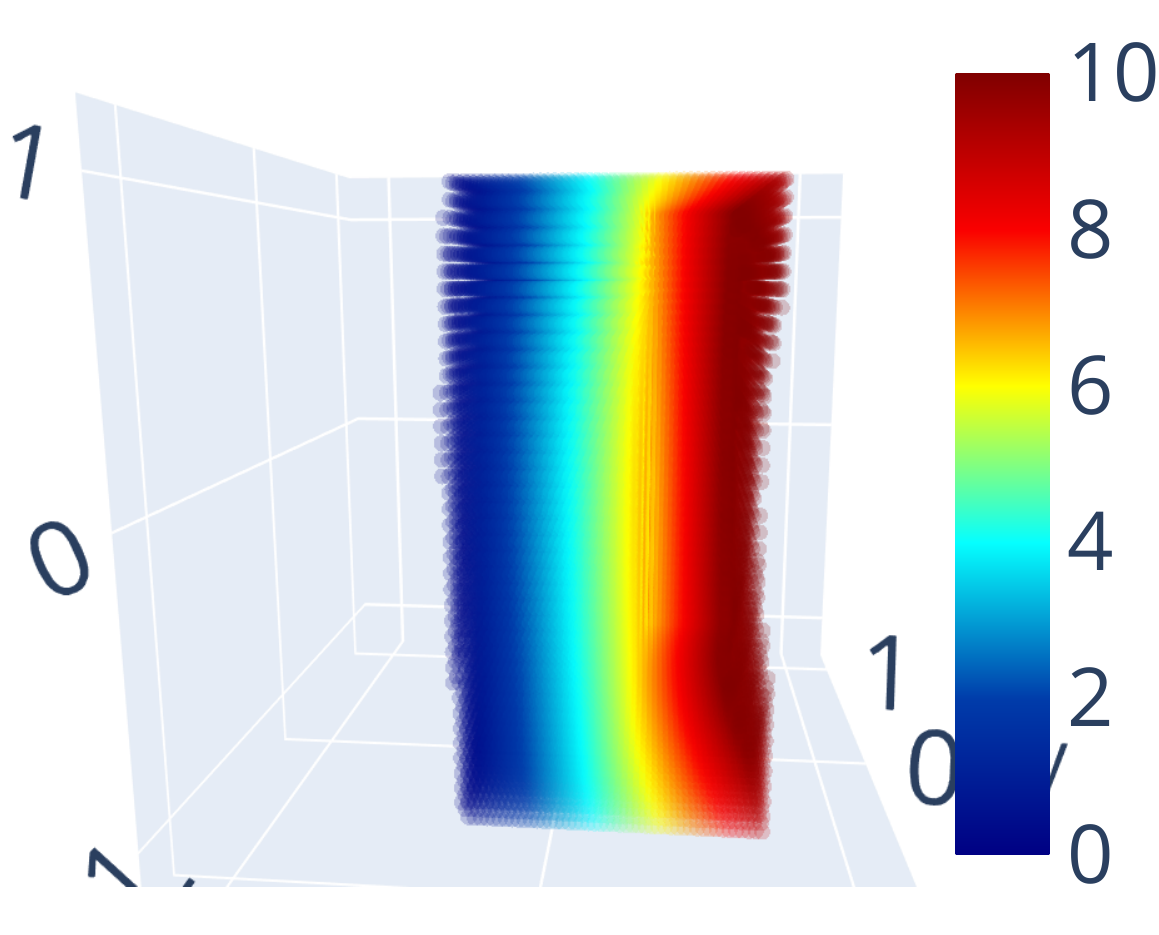}
         \caption{$[0.1,0.1,0.1]$}\label{fig:3b}
    \end{subfigure}
    \begin{subfigure}[b]{0.32\linewidth}
         \centering
         \includegraphics[width=1.\linewidth]{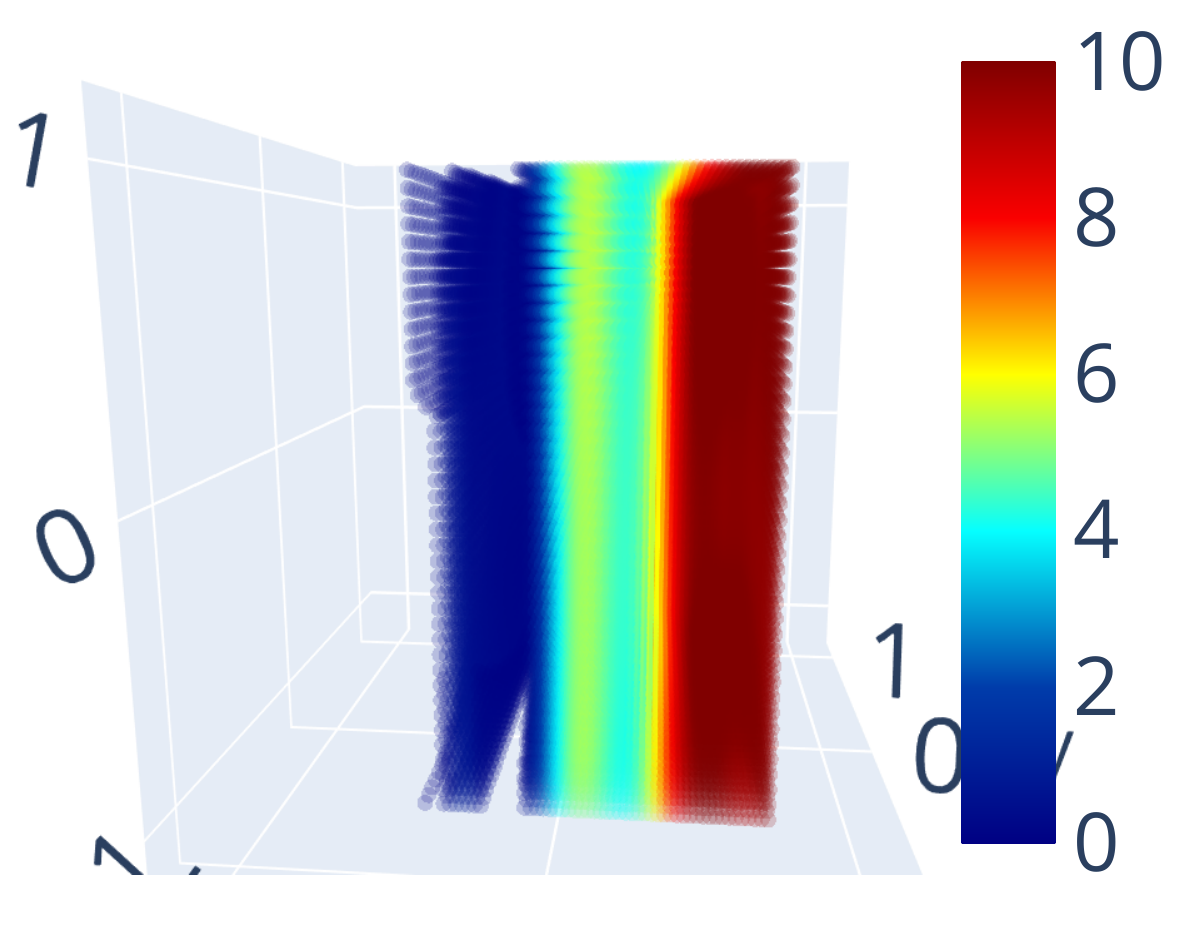}
         \caption{$[100,0.1,0.1]$}\label{fig:3c}
    \end{subfigure}
 \caption{Effect of dimension-adjusted kernels. Colors indicate velocity and the three values correspond to $[\gamma_x, \gamma_y, \gamma_z]$. In (c), when $\gamma_x > \gamma_y$ and $\gamma_x > \gamma_z$, the two boundaries between the three velocity clusters are much crispier.}
 \label{fig:ard}
\end{figure}

\begin{figure*}[]
\centering
     \begin{subfigure}[b]{0.32\linewidth}
         \centering
         \includegraphics[width=1.\linewidth]{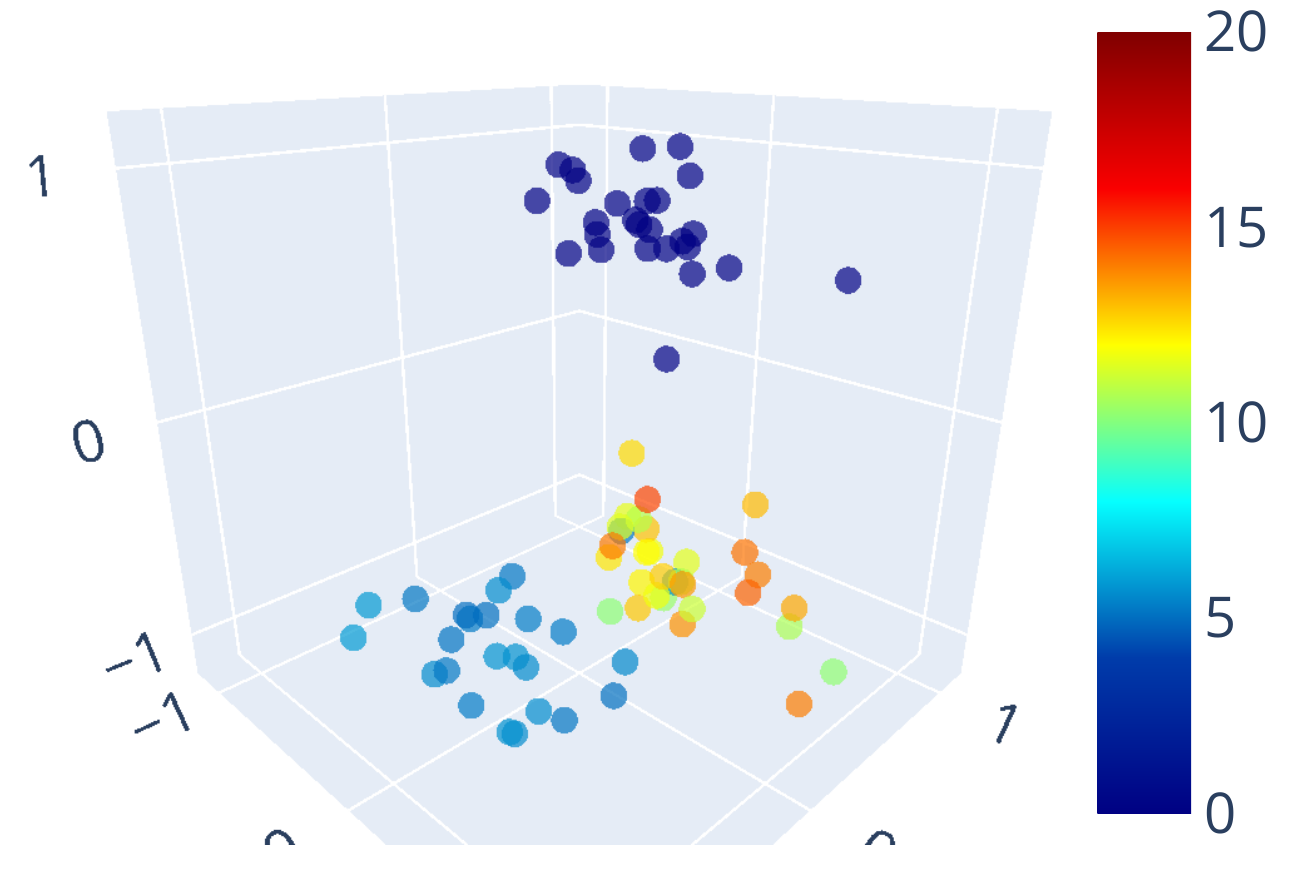}
         \caption{Blobs datasets}\label{fig:4a}
    \end{subfigure}
    \begin{subfigure}[b]{0.32\linewidth}
         \centering
         \includegraphics[width=1.\linewidth]{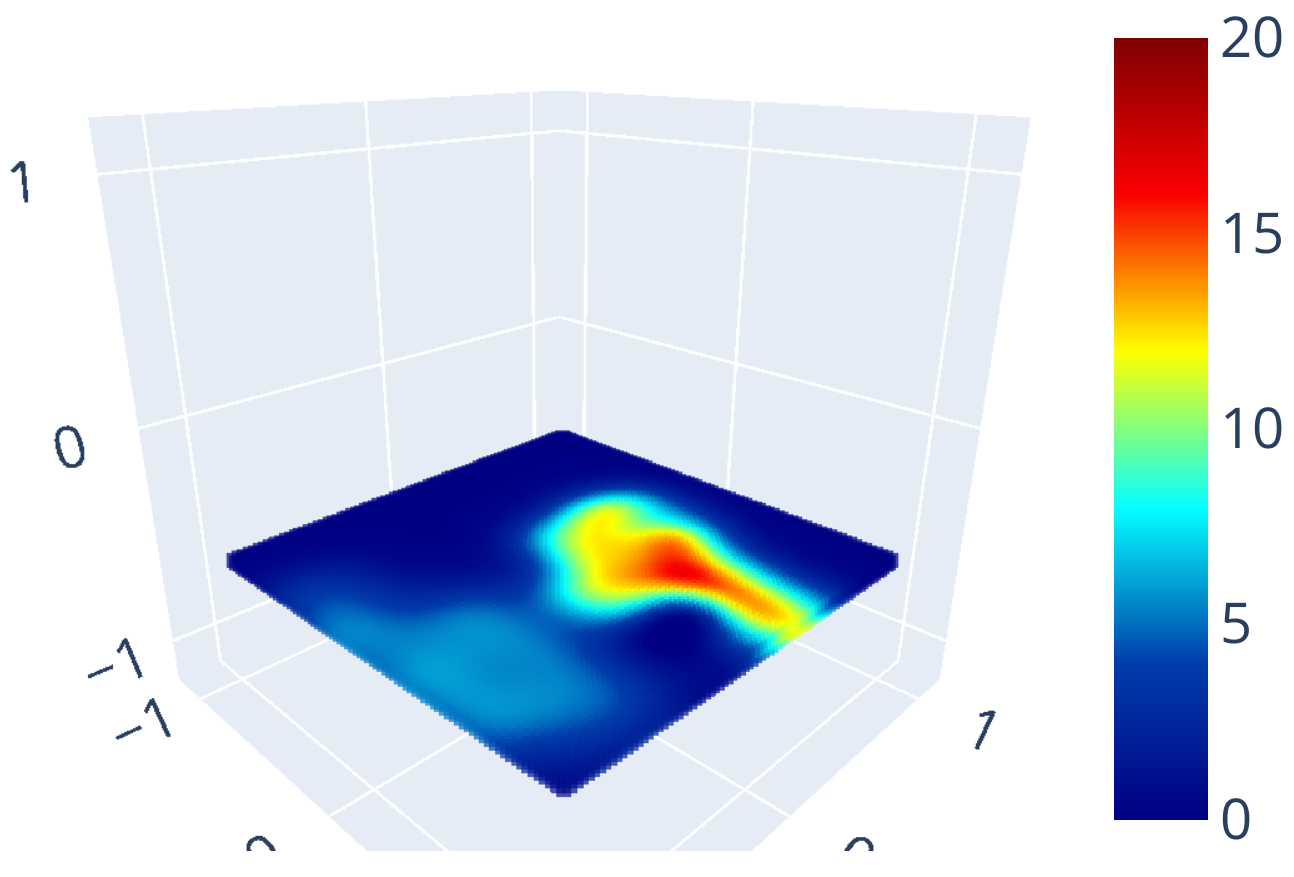}
         \caption{Predictive mean at $z=-0.6$}\label{fig:4b}
    \end{subfigure}
    \begin{subfigure}[b]{0.32\linewidth}
         \centering
         \includegraphics[width=1.\linewidth]{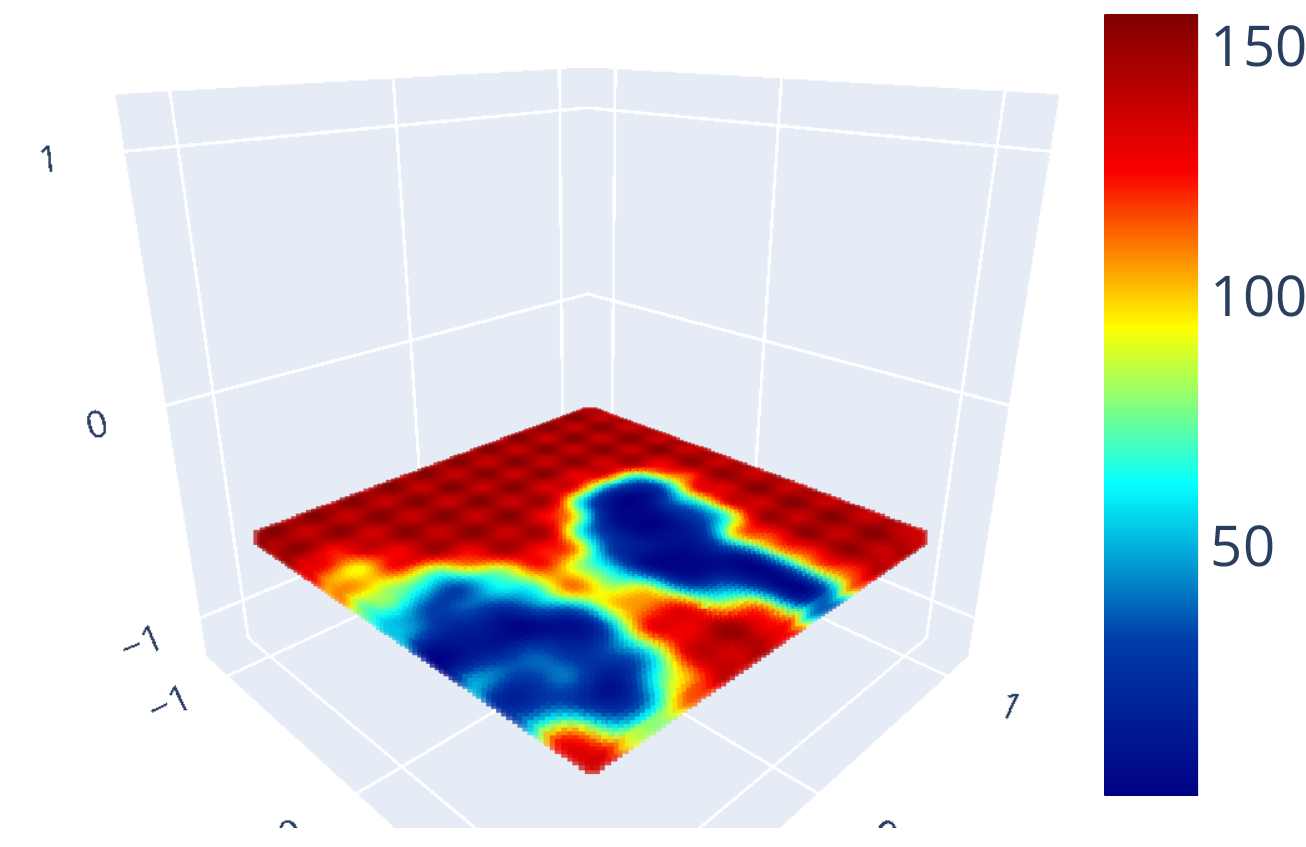}
         \caption{Predictive variance at $z=-0.6$}\label{fig:4c}
    \end{subfigure}
    \begin{subfigure}[b]{0.32\linewidth}
         \centering
         \includegraphics[width=1.\linewidth]{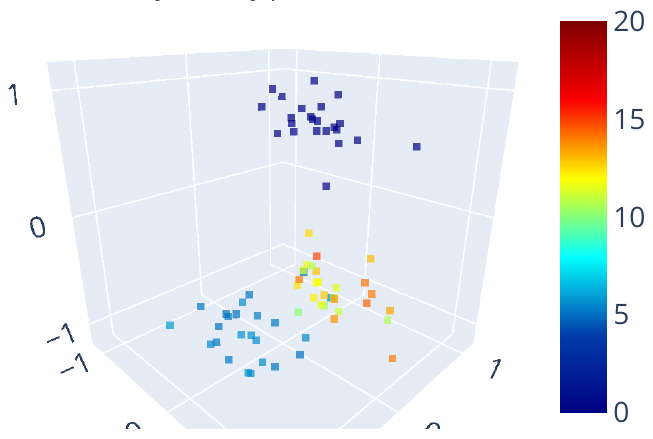}
         \caption{Prediction on training}\label{fig:4d}
    \end{subfigure}
    \begin{subfigure}[b]{0.32\linewidth}
         \centering
         \includegraphics[width=1.\linewidth]{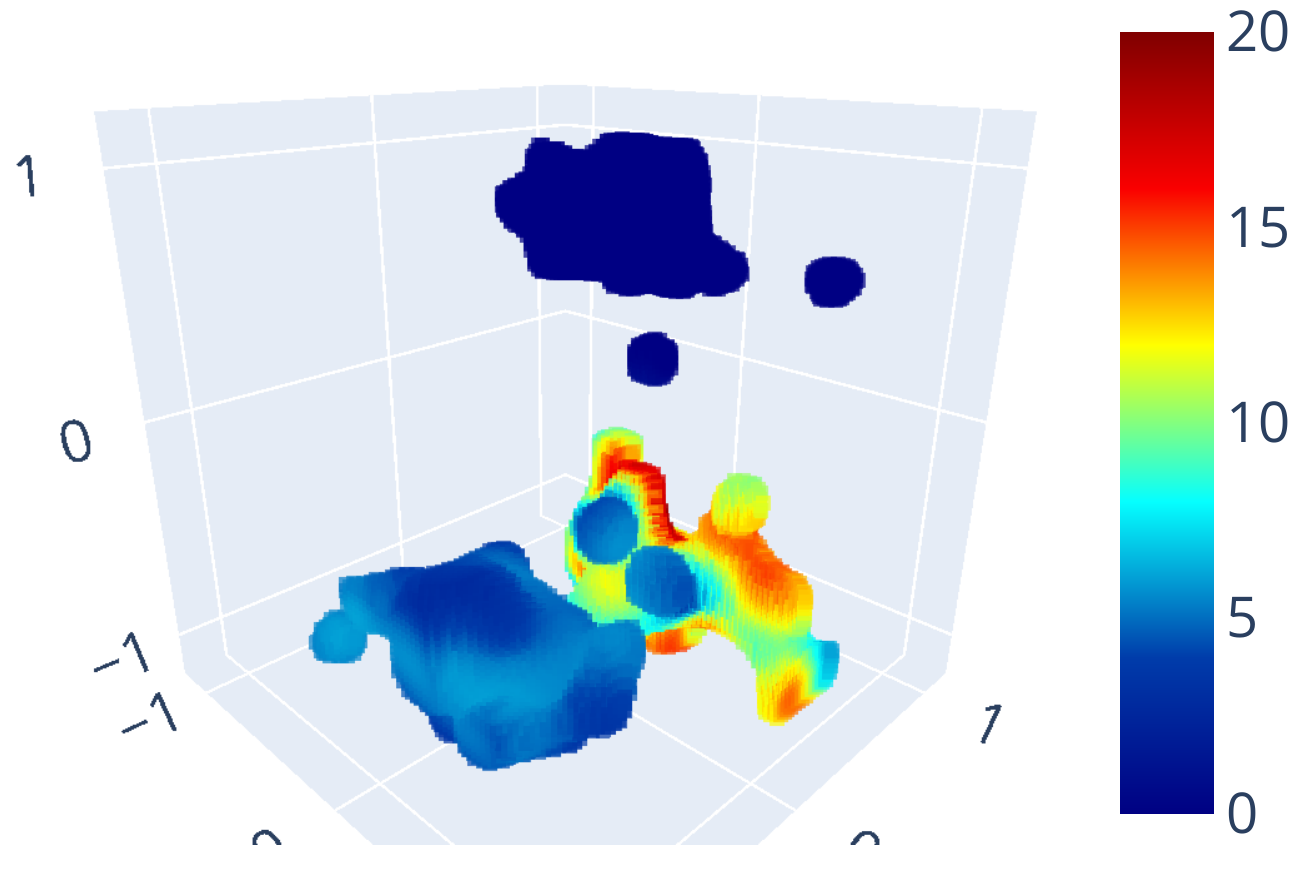}
         \caption{Filtered predictive mean}\label{fig:4e}
    \end{subfigure}
    \begin{subfigure}[b]{0.32\linewidth}
         \centering
         \includegraphics[width=1.\linewidth]{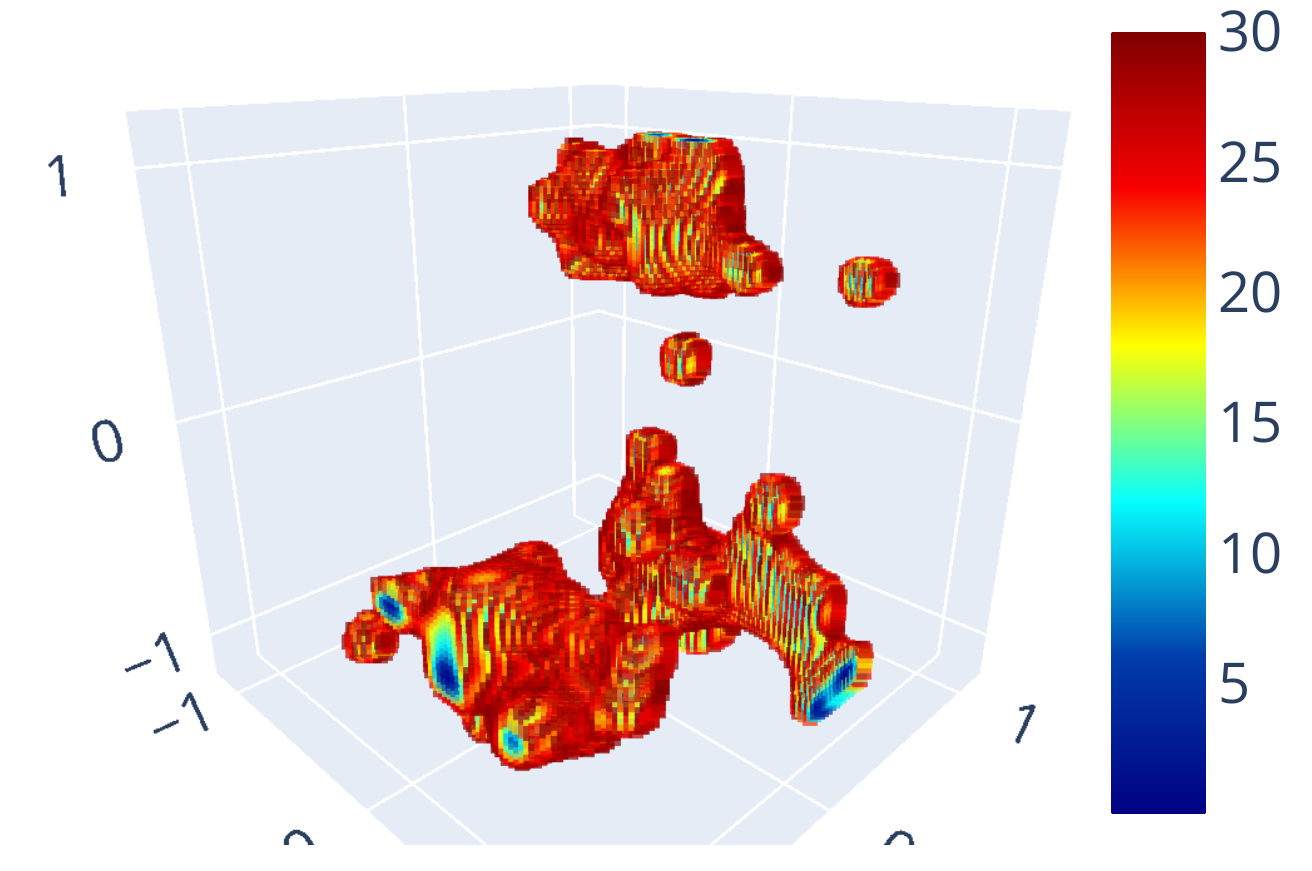}
         \caption{Filtered predictive variance}\label{fig:4f}
    \end{subfigure}
 \caption{Modeling the ${x}$ velocity component of the Blobs training dataset. (a) Each point in the 3D space has its own ${x}$ velocity. We want to predict the mean and variance of any other point using these data points. (b) Predicted mean velocity for $z=-0.6$. (c) Predicted variance of velocity for $z=-0.6$. Note that in areas where we do not have training data, the variance is high. (d) Predictions on the training dataset. (e)--(f) Velocity is predicted for the entire cubes but only high confidence predictions ($\sigma_* \le 30$) are shown.}
 \label{fig:6plots}
\end{figure*}

We studied the effectiveness of BDFs in constructing macroscopic and microscopic models from a variety of datasets summarized in Table~\ref{table:datasets}. These datasets were obtained from hi-fidelity simulators and real-world benchmark datasets. Since automotive radar is becoming increasingly popular in driverless cars, we included some of those datasets as well. The models we build using  small automotive radar datasets, Carla, Astyx, and nuScenes are microscopic because they only model their surroundings. These datasets have only a few data points per scan (Figure~\ref{fig:1c} and~\ref{fig:2b}). AirSim is a dataset that we generated using the AirSim simulator \cite{airsim2017fsr}. It contains 66859 data points of drone trajectories (Figure~\ref{fig:1b}) in a large area 1000$\times$400$\times$60 m$^3$. The airport dataset contains 128349 data points of real aircraft tracks within 30 nautical miles of the John F. Kennedy airport~\cite{Jung2019}.

Twenty percent of each dataset is used as the test dataset. For large areas, we normalized data to be in a cube of between $-1$ and $1$ and picked the hyperparameters $\gamma$ and grid distance using cross validation. The parameters $\alpha$ and $\beta$ were set to $10^{-2}$ and $10^2$, respectively. The code and video can be found at \url{https://github.com/RansML/BDF}. Experiments were run on a 3.30 GHz CPU. Since Gaussian process-based models are a common choice for modeling epistemic uncertainty in many robotics tasks~\cite{deisenroth2013gaussian,o2012gaussian}, we base-lined against full Gaussian process (FGP)~\cite{rasmussen2003gaussian} and its scalable approximations such as subset of data Gaussian process (SGP)~\cite{herbrich2003fast} and more recent big data Gaussian process (BGP)~\cite{senanayake2017learning}. GPflow~\cite{GPflow2017} was used for benchmarking. 

\subsection{Effect of dimension adjustment}
As the first experiment, we verify that dimension-adjusted kernels are useful to maintain sharp velocity transitions. For this purpose, we use the the Chunks dataset as it has sharp velocity transitions. Figure~\ref{fig:ard} shows that we get much crisper edges when we use a higher $\gamma$ in the $x$ direction. This is because gamma controls the contribution from each direction. Table~\ref{table:ard} further corroborates that higher $\gamma$ for the $x$-axis has the least root mean squared error (RMSE) for a similar mean standardized log loss (MSLL)~\cite{rasmussen2003gaussian}.

\subsection{Runtime and accuracy}

\begin{table}[]
  \centering
  \caption{Runtime and accuracy}
    \begin{tabular}{@{}ccrcc@{}}
    \toprule
       Dataset & Method & Train time (s) & Query time (s) & RMSE \\
      \midrule
      \parbox[t]{8mm}{\multirow{3}{*}{Chunks}} 
      & BDF & {\bf 6.805} & 0.673  & 0.778 \\
      & BGP & 1819.469 & 0.075 & 1.258  \\
      & SGP & 9.169 & 0.163 & 1.252  \\
      & FGP & 9.169 & 0.163 & 1.252 \\
      \midrule
      \parbox[t]{8mm}{\multirow{3}{*}{Blobs}} 
      & BDF & {\bf 0.448} & 0.037 & 1.119  \\
      & BGP & 89.411 & 0.038 & 0.669  \\
      & SGP & 1.839 & 0.036 & 0.688 \\
      & FGP & 1.839 & 0.036 & 0.688 \\
      \midrule
        \parbox[t]{8mm}{\multirow{3}{*}{Carla}} 
      & BDF & {\bf 0.726} & 0.044 & 4.581   \\
      & BGP & 66.618 & 0.051 & 5.744   \\
      & SGP & 1.849 & 0.032 & 5.992  \\
      & FGP & 1.849 & 0.032 & 5.992  \\
      \midrule
        \parbox[t]{8mm}{\multirow{3}{*}{Astyx}} 
      & BDF & 5.517 & 0.087 & 0.329   \\
      & BGP & 2575.300 & 0.059 & 0.275   \\
      & SGP & {\bf 4.983} & 0.110 & 0.275  \\
      & FGP & {\bf 4.983} & 0.110 & 0.275  \\
      \midrule
        \parbox[t]{8mm}{\multirow{3}{*}{nuScenes}} 
      & BDF & {\bf 0.001} & 0.010 & 0.370   \\
      & BGP & 45.651 & 0.019 & 0.395   \\
      & SGP & 0.587 & 0.016 & 0.396 \\
      & FGP & 0.587 & 0.016 & 0.396 \\
      \midrule
        \parbox[t]{8mm}{\multirow{3}{*}{AirSim}} 
      & BDF & {\bf 16.213} & 0.533 & 0.514 \\
      & BGP & 2979.046 & 0.125 & 0.736   \\
      & SGP & 163.545 & 1.455 & 0.154  \\
      & FGP & $\infty$ (est.) & $\infty$ (est.) & n/a  \\
      \midrule
        \parbox[t]{8mm}{\multirow{3}{*}{Airport}} 
      & BDF & {\bf 16.129} & 0.856 & 1.801   \\
      & BGP & 2966.236 & 0.154 & 2.139  \\
      & SGP & 214.030 & 4.232 & 1.670  \\
      & FGP & $\infty$ (est.) & $\infty$ (est.) & n/a  \\
      \bottomrule
      \multicolumn{5}{l}{\footnotesize{Due to limited scalability of benchmarks, only 3.8\% and 4.3\% of}}\\
      \multicolumn{5}{l}{\footnotesize{data in AirSim and JFK, respectively, was used for all four methods.}}\\
    \end{tabular}
  \label{table:metrics}
\end{table}

\begin{figure}[]
\centering
     \begin{subfigure}[b]{0.7\linewidth}
         \centering
         \includegraphics[width=1.\linewidth]{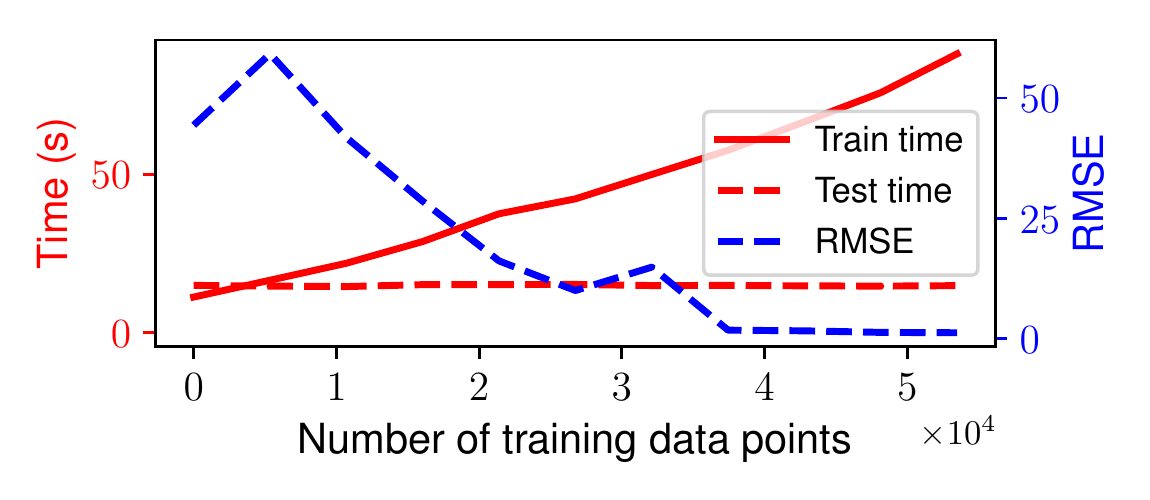}
         \caption{AirSim dataset}\label{fig:5a}	
    \end{subfigure}
    \begin{subfigure}[b]{0.7\linewidth}
         \centering
         \includegraphics[width=1.\linewidth]{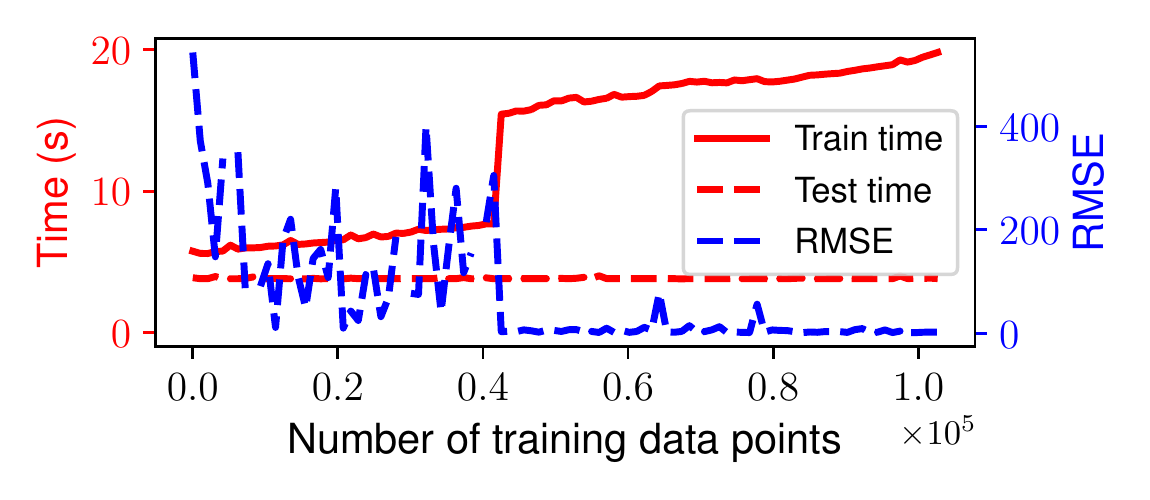}
         \caption{Airport dataset}\label{fig:5b}
    \end{subfigure}
 \caption{Effect of the increasing size of the dataset for the macroscopic model.}
 \label{fig:seq}
\end{figure}

\begin{figure}[]
\centering
     \begin{subfigure}[b]{0.7\linewidth}
         \centering
         \includegraphics[width=1.\linewidth]{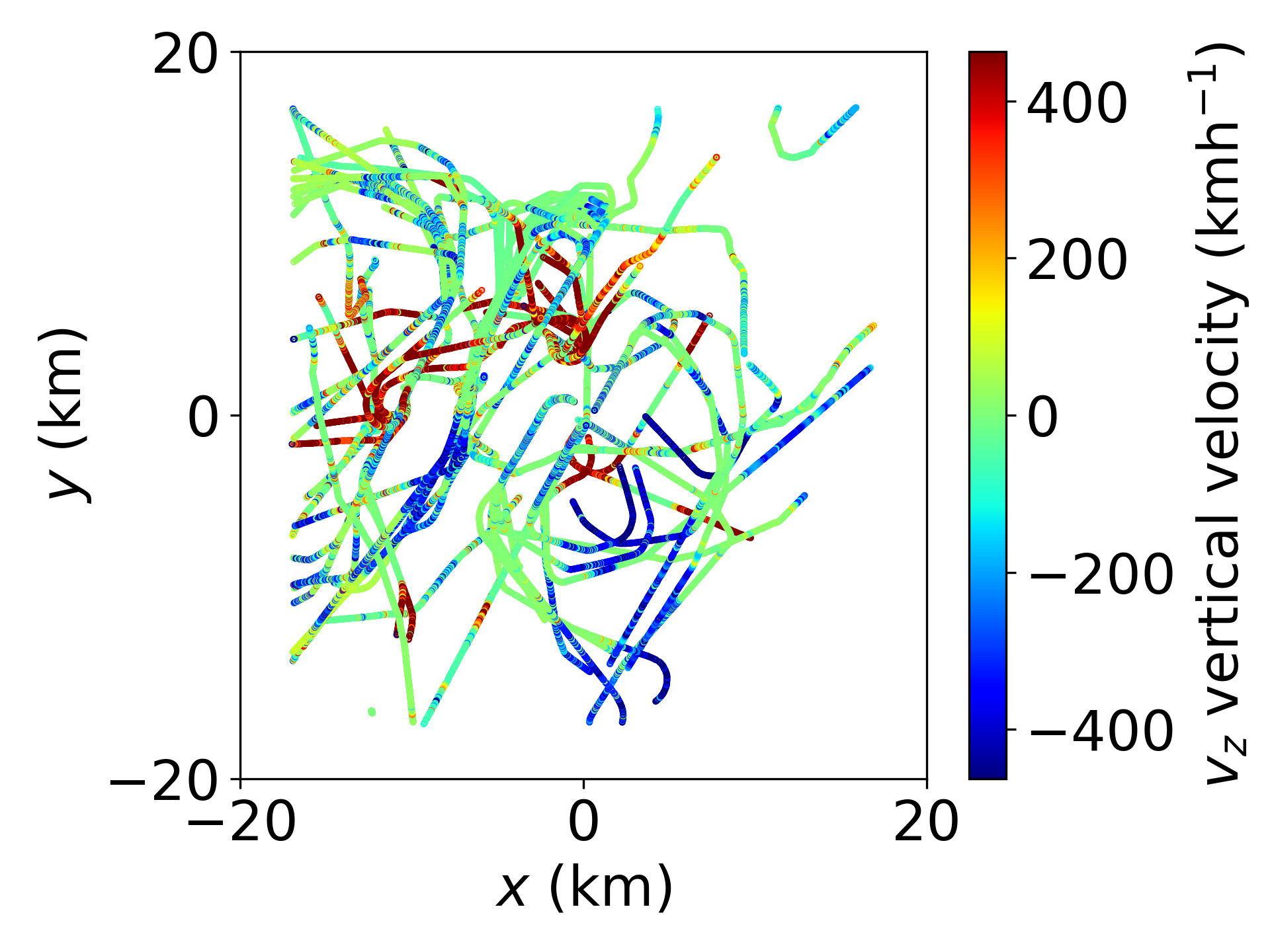}
         \caption{Ground truth}\label{fig:6a}	
    \end{subfigure}
    \begin{subfigure}[b]{0.7\linewidth}
         \centering
         \includegraphics[width=.95\linewidth]{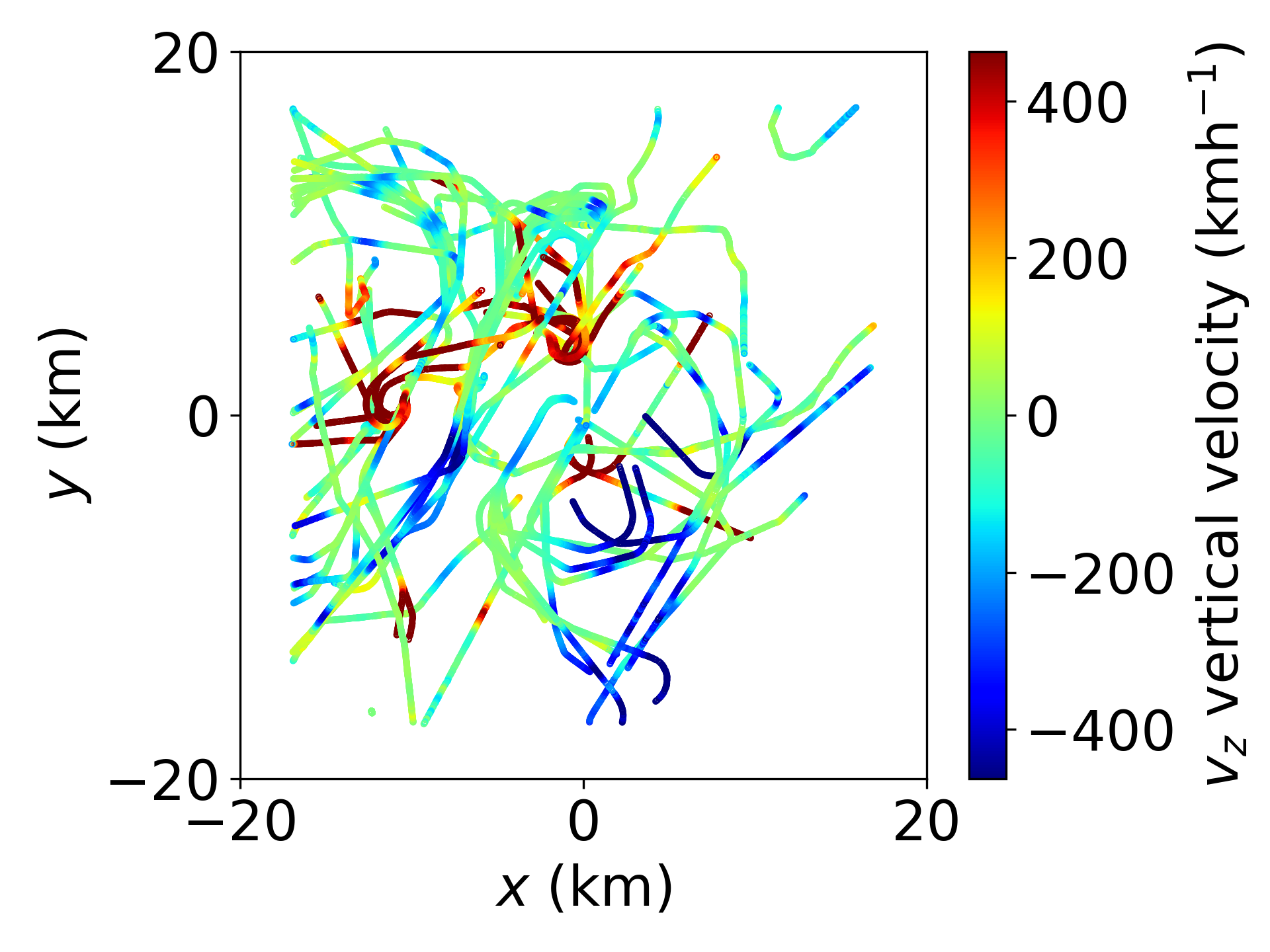}
         \caption{Mean prediction}\label{fig:6b}
    \end{subfigure}
 \caption{Bird's-eye view of 100 trajectories of the Airport dataset. The mean predictions of vertical velocities are similar to the ground truth.}
 \label{fig:Airport}
\end{figure}

\begin{figure}[]
\centering
     \begin{subfigure}[]{0.7\linewidth}
         \centering
         \includegraphics[width=1.\linewidth]{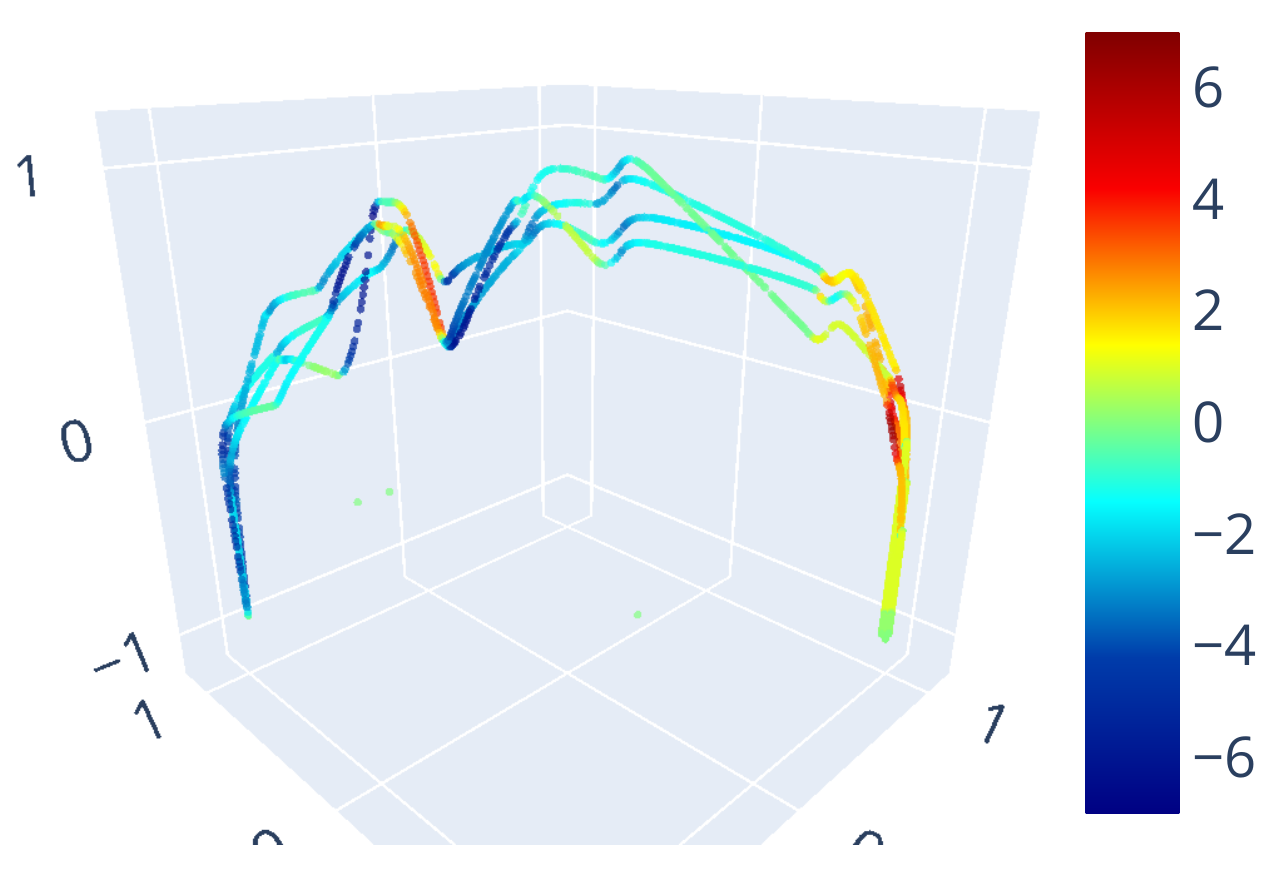}
         \caption{Ground truth trajectories}\label{fig:7a}	
    \end{subfigure}
    \begin{subfigure}[]{0.7\linewidth}
         \centering
         \includegraphics[width=1.\linewidth]{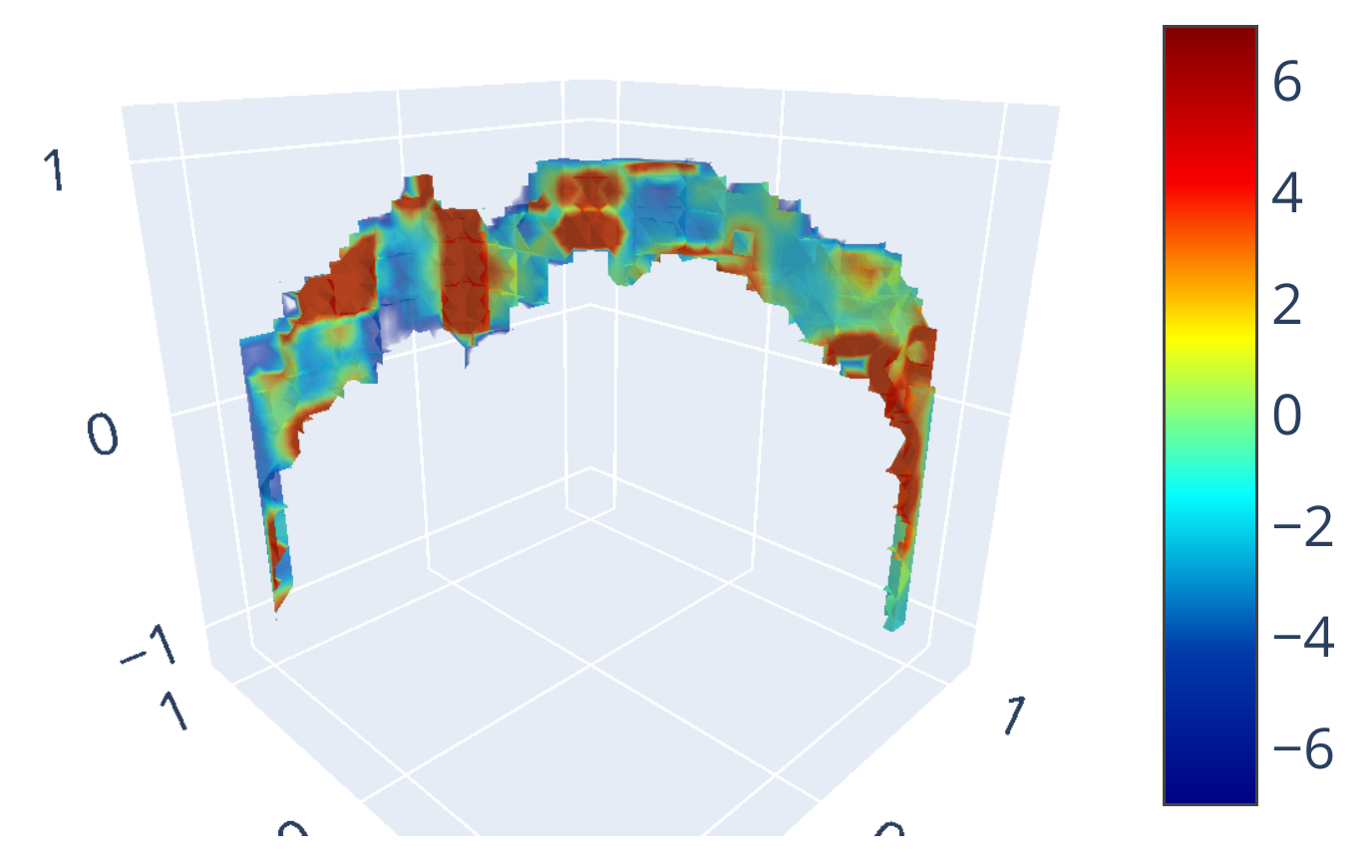}
         \caption{Safe velocity tube}\label{fig:7b}
    \end{subfigure}
 \caption{One of the ``air roads'' simulated in AirSim. Color indicates vertical velocity. A ``3D tube'' of velocity is obtained by filtering mean predictions below a given variance threshold. These tubes can be used for risk-aware control~\cite{cannon2012stochastic}.}
 \label{fig:tube}
\end{figure}

Figure~\ref{fig:6plots} shows a detailed example of how we can model the 3D space. Observe that the variance in areas where we do not have data is higher, indicating the epistemic uncertainty. Metrics for each dataset are reported in Table~\ref{table:metrics}. For almost all datasets, our model has the least training time to achieve similar accuracy to other methods. For small datasets (points < 1000), SGP is equivalent to FGP as the subset is the same as the full dataset. Since BGP is based on a stochastic gradient approach, it is typically harder to optimize compared to other analytical forms and is not extremely useful in small data settings.

The efficiency of our model is more pronounced in large datasets such as Airport (Figure~\ref{fig:Airport}) and AirSim (Figure~\ref{fig:tube}). BDF is at least 10 times faster than the second best performing model. This is because its asymptotic computational complexity is $\mathcal{O}(M^3)$ for $M$ kernels. That is, the speed depends only on the number of kernels but not on the number of data points. In contrast, FGPs have a $\mathcal{O}(N^3)$ memory complexity for both training and query for $N$ data points. In their sparse approximations, $P$ \emph{inducing points} are used to represent the key points in the dataset~\cite{quinonero2005unifying}. For $P \ll N$, the computational complexities of BGP and SGP are $\mathcal{O}(P^2N)$ and $\mathcal{O}(P^3)$, respectively. Although SGP, in theory, has a similar asymptotic computational complexity, the major drawback of SGP is that it discards a large amount of data to achieve this speedup. In our method, every data point has an equal contribution when training the model. 

Although BDFs have $\mathcal{O}(M^3)$ complexity, in practice, when $M$ is finite (around 1000 in most of our experiments), we observe a slight increase with the number of data points due to the matrix product in (\ref{eq:post_mu}). This can be observed in Figure~\ref{fig:seq} in which we separately train the model for an increasing number of data points. This slight increase of time is negligible compared to BGP and FGP in which the training time is 50 minutes before failing.

Since BDFs are parametric models that use kernels, we can update parts of the model or combine various models. This is because kernels are similarity functions, and therefore the weight parameters of a kernel located at $\tilde{\mathbf{x}}$ are not affected by data far away from them. Similarly, a large-scale model can be easily decomposed to create light-weight models that cater to only a designated area of the environment.

\section{Conclusions}
This paper presented Bayesian Dynamic Fields to model velocity in the 3D space. The velocity of the environment is represented as a continuous function that can be queried at arbitrary resolutions. The training procedure is equally suitable for both small and big data regimes, making it suitable to build microscopic and macroscopic transportation models. The model captures both velocity estimates as well as the uncertainty associated with those estimates. In conjunction with common environment representations such as occupancy maps in robotics, in the future, we will use the uncertainty estimates of velocity for decision-making under uncertainty and safety analysis~~\cite{cannon2012stochastic}.

\section*{Acknowledgment}
The authors thank Soyeon Jung for assisting with preprocessing the aviation radar dataset. Toyota Research Institute (TRI) provided funds to assist the authors with their research, but this article solely reflects the opinions and conclusions of its authors and not TRI or any other Toyota entity.

\renewcommand*{\bibfont}{\footnotesize}
\printbibliography


\end{document}